\newcommand{\methname}{{TransFusion}}
\newcommand{\tit}[1]{\smallbreak\noindent\textbf{#1 }}
\newcommand{\ie}{\textit{i.e.}}
\crefname{section}{Sec.}{Sec.}
\crefname{table}{Tab.}{Tab.}
\crefname{figure}{Fig.}{Fig.}
\crefname{equation}{Eq.}{Eq.}
\theoremstyle{plain}
\newtheorem{theorem}{Theorem}[section]
\newtheorem{proposition}[theorem]{Proposition}
\theoremstyle{definition}
\theoremstyle{remark}
\definecolor{LightCyan}{rgb}{0.88,1,1}
\definecolor{Gray}{gray}{0.94}
\newcolumntype{g}{>{\columncolor{Gray}}c}
\newcolumntype{a}{>{\columncolor{LightCyan}}c}
\icmltitlerunning{Update Your Transformer to the Latest Release: Re-Basin of Task Vectors}
\begin{document}

\twocolumn[
\icmltitle{Update Your Transformer to the Latest Release: Re-Basin of Task Vectors}

\icmlsetsymbol{equal}{*}

\begin{icmlauthorlist}
\icmlauthor{Filippo Rinaldi}{equal,unimore}
\icmlauthor{Giacomo Capitani}{equal,unimore}
\icmlauthor{Lorenzo Bonicelli}{unimore}
\icmlauthor{Donato Crisostomi}{sapienza}
\icmlauthor{Federico Bolelli}{unimore}
\vspace{0.5em}

\icmlauthor{Elisa Ficarra}{unimore}
\icmlauthor{Emanuele Rodolà}{sapienza}
\icmlauthor{Simone Calderara}{unimore}
\icmlauthor{Angelo Porrello}{unimore}
\end{icmlauthorlist}

\icmlaffiliation{unimore}{AImageLab, University of Modena and Reggio Emilia, Italy.}
\icmlaffiliation{sapienza}{Sapienza, University of Rome, Italy}

\icmlcorrespondingauthor{Filippo Rinaldi}{filippo.rinaldi@unimore.it}

\icmlkeywords{Machine Learning, ICML, Weight Interpolation, Transfer Learning, Model Re-basin, Model Editing, Model Patching}
\vskip 0.3in]

\printAffiliationsAndNotice{\icmlEqualContribution} %

\begin{abstract}
Foundation models serve as the backbone for numerous specialized models developed through fine-tuning. However, when the underlying pretrained model is updated or retrained (e.g., on larger and more curated datasets), the fine-tuned model becomes obsolete, losing its utility and requiring retraining. This raises the question: is it possible to transfer fine-tuning to a new release of the model? In this work, we investigate how to transfer fine-tuning to a new checkpoint without having to re-train, in a data-free manner. To do so, we draw principles from model re-basin and provide a recipe based on weight permutations to re-base the modifications made to the original base model, often called task vector. In particular, our approach tailors model re-basin for Transformer models, taking into account the challenges of residual connections and multi-head attention layers. Specifically, we propose a two-level method rooted in spectral theory, initially permuting the attention heads and subsequently adjusting parameters within select pairs of heads. Through extensive experiments on visual and textual tasks, we achieve the seamless transfer of fine-tuned knowledge to new pre-trained backbones without relying on a single training step or datapoint. Code is available at \url{https://github.com/aimagelab/TransFusion}.
\end{abstract}

\section{Introduction}
Recently, there has been a notable shift among researchers and practitioners towards fine-tuning pre-trained models, rather than building them from scratch. This method leverages backbones developed on large-scale datasets, considerably decreasing the amount of data and training time needed to tailor models for specific downstream tasks. For this reason, pre-trained backbones such as OpenAI's CLIP~\cite{radford2021learning} are being extensively utilized as base foundation models. As a result, the corresponding fine-tuned versions play a crucial role in numerous real-world applications like medical imaging~\cite{lu2024avisionlanguage} and satellite image analysis~\cite{mall2024remote}.
\begin{figure}[t]
\vskip 0.2in
\centering
\includegraphics[width=0.99\linewidth]{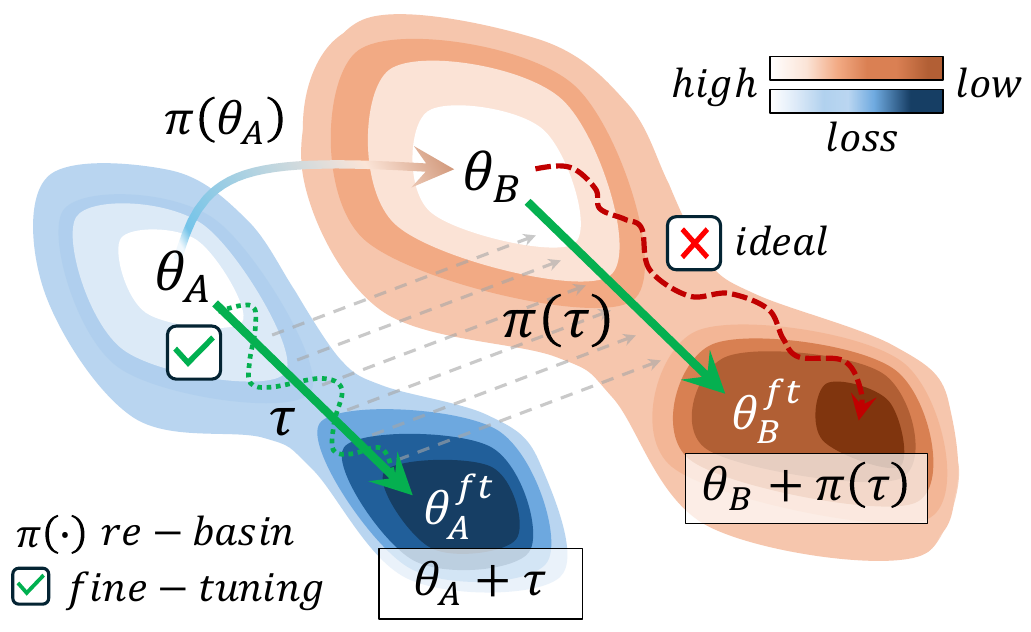}
\caption{Transporting task vector $\tau$ from a fine-tuned base model $\theta_A^{ft} = \theta_A + \tau$ to a new release $\theta_B$.}
\label{fig:intro}
\vskip -0.2in
\end{figure}

However, while these pre-trained backbones are widely adopted, their evolution poses new challenges, with tech companies and academic institutions frequently releasing updated checkpoints. Often, these updates do not modify the underlying architecture but simply consist of new weights, trained on increasingly large datasets compared to their predecessors~\cite{ilharco_gabriel_2021_5143773}. Moreover, the additional training data may be more curated or specifically tailored to specialized domains, boosting their zero-shot capabilities considerably.

To take advantage of newly released checkpoints, the typical approach is to retrain them on the downstream task. This means fine-tuning the new checkpoint on the same data already used to adapt the original model. Besides the considerable costs associated with re-training the new model, this strategy is also unviable in certain scenarios. Indeed, the data for the downstream task might be no longer available due to compliance with privacy or storage constraints. 

This raises an important question: can we \textbf{re-use} the fine-tuning that has already been performed on the newly released model? Precisely, the overall aim of this paper is to investigate whether we can \textit{transport} the previous fine-tuning, in a training-free manner. To understand the idea of \textit{transport}, we consider the weights of the original base model as $\theta_{A}$, and their fine-tuning as $\theta_{A}^{ft} = \theta_{A} + \tau$. The task vector~\cite{ilharcoediting, ortiz2024task} $\tau = \theta_{A}^{ft} - \theta_{A}$ represents a direction from $\theta_{A}$ that embodies all adjustments made during the fine-tuning process. Hence, our goal is to find a procedure $\pi(\cdot)$ that can transport the task vector $\tau$ into an appealing basin of the newly released model $\theta_{B}$ (see~\cref{fig:intro}). In this approach, the procedure $\pi(\cdot)$ must be designed to ensure that the modified weights $\theta_{B}^{ft} = \theta_{B} + \pi(\tau)$ achieve low loss on the downstream task.

When designing the transportation function $\pi(\cdot)$, the ideal approach should be data-free and training-free to meet the concerns above. Nevertheless, if the two base models $\theta_A$ and $\theta_B$ differ significantly (due to varying initialization, training strategies, or datasets), the knowledge acquired during fine-tuning of $\theta_A$ may not transfer to $\theta_B$ with a mere addition of the original task vector (\ie, $\theta_{B}^{ft} = \theta_{B} + \tau$). To bridge the gap in representation spaces and facilitate the transfer, intuitively, we have to make the two base models ``compatible'', such that they ``speak the same language''. To address this challenge, we could \textit{rebase} one of the two models (for instance, $\theta_A$), such that any linear interpolation between the weights of the edited $\theta^{\prime}_{A}$ and $\theta_{B}$ yields an intermediate model that performs comparably to both $\theta_{A}$ and $\theta_{B}$. This indicates that the models are now aligned and thus share a common low-loss basin. Notably, this concept of re-basin models shares similarities with the approach described in~\cite{ainsworth2023git}, where alignment is achieved by finding optimal permutations of the rows in the weight matrices. In this paper, we build upon this idea and explore its application in the context of fine-tuning, with a task vector being permuted and finally applied to $\theta_{B}$.

While model re-basin presents an appealing framework, it currently faces several technical hindrances. To date, successful applications of model re-basin have been limited to Multi-Layer Perceptrons (MLPs) and Convolutional Neural Networks (CNNs)~\cite{ainsworth2023git}. Unfortunately, the application of model re-basin to multi-head attention layers, despite their widespread use in Transformer-based architectures, has been largely overlooked, with a few very recent attempts based on Optimal Transport (see \cref{sec:related}). However, these recent methods do not guarantee \textbf{functional equivalence} between the permuted model and the original, unpermuted model. The primary obstacle lies in managing the weights associated with multiple attention heads. As we discuss in \cref{sec:method}, indeed, to apply standard permutation-based approaches~\cite{ainsworth2023git, singh2020model, imfeld2024transformer}, the heads must be concatenated and treated as a single unified projection. This way, after applying permutations, each head of the edited model may incorporate rows from different original heads — an issue we refer to as \textit{head contamination}. This is problematic because, without preserving the logical separation of heads during permutation, it becomes impossible to invert the permutation process and recover the original, unpermuted output of the attention block. Furthermore, existing methods struggle when two addends in the computational graph rely on distinct permutation matrices, a situation common in residual connections such as $h' = h + f(h)$. Differently, we avoid averaging the respective permutation matrices, thereby preserving their discrete nature.

To address these issues, we propose a structured two-level approach for effective re-basin of Transformer-based models, called \methname{}. To avoid \textit{head contamination}, \methname{} firstly seeks optimal mappings between pairs of heads (\textit{inter-head} permutations); subsequently, we restrict permutations to only the rows within these coupled heads (\textit{intra-head} permutations).We mathematically prove that this two-level permutation strategy prevents head contamination and preserves \textbf{functional equivalence} between the original and permuted models. Notably, the \textit{inter-head} permutations are optimized leveraging a distance metric that is \textit{invariant} to permutations of the rows and columns within the heads. Such a metric is founded on spectral theory~\cite{jovanovic2012spectral} and employs the singular values of the weight matrices, which are unaffected by orthogonal transformations like those induced by permutations. 

We show that transporting task vectors allows the transfer of knowledge into a new checkpoint in a data-free manner. In practice, this means we can improve the zero-shot performance of the new version on the downstream task. We also demonstrate that the transport retains the generalization capabilities on a support set --- a crucial factor to justify updating the base model to the new release. To summarize: 

\begin{tcolorbox}[colback=gray!2!white, colframe=black]
\textbf{Motivation.} Given a fine-tuned Transformer model $f(\cdot, \theta_{A}^{ft} = \theta_{A} + \tau)$ we aim to transfer the task vector $\tau$ into a new release $f(\cdot, \theta_{B})$ in a data-free manner.

\smallskip
\textbf{Contributions.} \textit{i)} We introduce a novel data-free method for aligning Transformers. \textit{ii)} We propose a permutation-invariant spectral measure to manage multi-head attention layers. \textit{iii)} We reveal the practicality of applying a task vector from one base model $\theta_A$ to a new release $\theta_B$.
\end{tcolorbox}

\section{Background}
\label{sec:background}
Given two models with weights $\theta_A$ and $\theta_B$, \textbf{model re-basin}~\cite{ainsworth2023git} investigates how to permute the units of one model to facilitate the alignment of two models. The two models are then merged in the weight space, resulting in an interpolated model that achieves performance comparable to that of the two original ones. 

Following the notation of~\citep{ainsworth2023git}, \textbf{re-basin} is defined as any operation defined on the weights of two models $\theta_A$ and $\theta_B$ that maps one of the two models onto the local loss region (\textit{basin}) of the other one. To assess the effectiveness of re-basin, one common approach is to check the property of linear mode connectivity~\cite{frankle2020linear,entezari2022the} between the permuted model and the other, reference model. Informally, this involves checking if the model weights laying on the linear path connecting $\theta_A$ and $\theta_B$ also result in a low loss value.

To reach such a property, existing model re-basin techniques leverage the permutation symmetries inherent in neural networks~\cite{entezari2022the}. These symmetries allow the swapping of the units within a layer without changing the functionality of the network. To show that, we consider the activation of the $\ell$-th layer of an MLP:
\begin{equation}
z_{\ell+1} = \sigma(W_{\ell} z_{\ell} + b_{\ell}), \quad z_0 = x,
\end{equation}
where $W_{\ell}$ and $b_{\ell}$ are the weight matrix and bias vector and $\sigma$ denotes an element-wise activation function. In this case, the following relation holds for any permutation matrix $P$:
\begin{align}
\label{eq:rebasinone}
z_{\ell+1} &= P^\top P z_{\ell+1} = P^\top P \sigma(W_{\ell} z_{\ell} + b_{\ell}), \\
&= P^\top \sigma(P W_{\ell} z_{\ell} + P b_{\ell}), \quad \text{where } P \in S_d,
\end{align}
with $S_d$ denoting the set of $d \times d$ permutation matrices. Thanks to this relation, we can essentially permute the weights and biases of a layer using a matrix $P$. Therefore, when we apply the permutation $P$ to the parameters of a layer, the resulting output undergoes the same permutation. However, to ensure that the transformed model remains functionally equivalent to the original, the next layer must process the output in its original, unpermuted form. This can be achieved equivalently by permuting the weights of the subsequent layer using the inverse permutation $P^\top$. Accordingly, we define a transformed set of weights $\theta'$ as:
\begin{equation}
\label{eq:rebasintwo}
W_{\ell}' = P W_{\ell}, \quad b_{\ell}' = P b_{\ell}, \quad W_{\ell+1}' = W_{\ell+1} P^\top.
\end{equation}
\tit{Git Re-Basin.} \Citet{ainsworth2023git} exploit \cref{eq:rebasinone} to induce weight alignment between $\theta_A$ and $\theta_B$. Formally, we consider the $\ell$-th feed-forward layer, with weight matrices $W^{(A)}_{\ell}$ and $W^{(B)}_{\ell}$ for $\theta_A$ and $\theta_B$ respectively. Given that each row of $W^{(A)}_{\ell}$ and $W^{(B)}_{\ell}$ represents a distinct feature, if $[W^{(A)}_{\ell}]_{i,:} \approx [W^{(B)}_{\ell}]_{j,:}$, then it makes sense to associate the units $i$ and $j$. Therefore, we could formalize the alignment as finding the permutation matrix that maximizes the dot product between $P_\ell W^{(A)}_{\ell}$ and $W^{(B)}_{\ell}$. However, to preserve functional equivalence, we have to account for the term $P_{\ell-1}^{\top}$ related to the permutation of the previous layer ---see \cref{eq:rebasintwo}. This results in a global optimization across layers: 
\begin{align}
\label{eq:optproblem}
\underset{\pi=\left\{P_{\ell}\right\}_{1}^{L}} {\operatorname{argmax}} & \left\langle W^{(B)}_{1}, P_{1} W^{(A)}_{1} \right\rangle + \left\langle W^{(B)}_{2}, P_{2} W^{(A)}_{2} P_{1}^\top \right\rangle + \nonumber \\ 
& \ \ \ \ \ \ \ + \dots + \left\langle W^{(B)}_{L}, W^{(A)}_{L} P_{L-1}^{\top} \right\rangle.
\end{align}
where $\langle A, B \rangle = \sum_{i,j} A_{i,j} B_{i,j}$ is the inner product between real-valued matrices. As discussed in~\cite{ainsworth2023git}, the optimization problem described in \cref{eq:optproblem} corresponds to the Symmetric Orthogonal Bilinear Assignment Problem (SOBLAP), which is unfortunately NP-hard. Its relaxation re-casts it as a series of Linear Assignment Problems (LAPs), focusing on one permutation $\boldsymbol{P_\ell}$ at a time while keeping the others fixed. In formal terms:
\begin{equation}
\begin{aligned}
\underset{\textcolor{red}{P_{\ell}}}{\operatorname{argmax}} &\left\langle W_{\ell}^{(B)}, \textcolor{red}{P_{\ell}} W_{\ell}^{(A)} P_{\ell-1}^{\top}\right\rangle
+ \left\langle W_{\ell+1}^{(B)}, P_{\ell+1} W_{\ell+1}^{(A)} \textcolor{red}{P_{\ell}}^{\top} \right\rangle.
\end{aligned}
\label{eq:ff-rebasin}
\end{equation}
Notably, each LAP can be solved with efficient, polynomial-time methods like the Hungarian algorithm~\cite{jonker1988shortest}. The outcome is a set of permutation matrices $\pi=\left\{P_{\ell}\right\}_{1}^{L}$, which, when applied to model $\theta_A$, result in a new model $\theta_{A'} = \pi(\theta_A)$. Notably, this model is functionally equivalent and, theoretically, it resides in the low-loss basin of $\theta_B$. However, as optimizing a series of LAPs is a coarse approximation of the SOBLAP, there are no strong guarantees regarding the optimality of the solution.

\section{\methname{}: An Approach For Re-basin Transformer Models}
\label{sec:method}
\tit{Objective.} Our approach, named \methname{}, is designed to transfer task-specific knowledge between transformer-based models that have undergone different pre-training. Specifically, it starts with an initial weight set $\theta_A$ and a task vector $\tau = \theta_A^{ft} - \theta_A$, derived after fine-tuning on a downstream task. The goal is to adapt $\tau$ to a new parameter configuration $\theta_B$. This process aims to preserve the inherent properties of $\theta_B$ --- for example, its superior zero-shot capabilities compared to $\theta_A$ --- and to integrate specialized knowledge carried out by $\tau$ for the downstream task. Finally, we aim to enable model transfer in a data-free manner.

\tit{Weight Matching.} To achieve these objectives, we start by aligning the weights of $\theta_A$ with those of $\theta_B$. This is accomplished with a novel \textit{data-free weight matching strategy} tailored for Transformer architectures. The procedure is deeply discussed in \cref{sec:weigthmatching}. Briefly, we tackle various shortcomings of existing methods and handle two building blocks of attention-based networks, namely the residual paths and the multi-head attention mechanism. To manage the latter, we introduce a novel two-step process that employs a permutation-invariant spectral metric to match pairs of heads within the same layer of $\theta_A$ and $\theta_B$. Subsequently, we permute features within the matched heads to optimize weight alignment, as detailed in \cref{sec:background}.

\tit{Transport.} We end up with a functionally equivalent model $\theta_A' = \pi(\theta_A)$, where $\pi(\cdot)$ yields a permutation of every layer in $\theta_A$. Afterwards, $\pi(\cdot)$ is used to transport the task vector $\tau = \theta_{ft} - \theta_A$ into the low-loss basin of $\theta_B$ (see~\cref{sec:transportation}).
\subsection{Attention Alignment for Transformer Models}
\label{sec:weigthmatching}
A Transformer-based block consists of a multi-head attention layer and an MLP block, connected through residual connections. Considering the MLP, this builds upon standard linear projections, which we treat as discussed in~\cref{sec:background}. Instead, we adopt a novel, tailored approach for multi-head attention layers addressing a common pitfall. Considering multiple heads, current methods view their projections as a whole linear layer, thereby joining the corresponding weight matrices before applying permutations. However, such an approach does not reflect the organization of attention in distinct, parallel heads. For example, this can result in artifacts, where units from separate heads in the original model are mixed together --- an issue we call \textit{head contamination}. This compromises the structural separability of attention heads and precludes the preservation of \textit{functional equivalence}, that is, the ability to permute and subsequently unpermute the weight matrices while yielding identical model outputs.

In the following, we present our proposal against head contamination (see \textbf{Step 1} and \textbf{2}) and a practical approach to handle residual connections (\textbf{Step 3}). The complete methodology is outlined in~\cref{global_algo}. 

\tit{Step 1: Inter-Head Alignment}
\\
Consider the q(uery), k(ey), and v(alue) projection matrices $W_q$, $W_k$, and $W_v$ $\in \mathbb{R}^{d_m \times d_m}$ --- 
with $d_m$ denoting the total embedding dimension of the attention module. We partition each matrix into $H = \#\text{heads}$ matrices (one for each head) of shape $d_k \times d_m$, where $d_k = \frac{d_m}{H}$. This results in a tensor $\Tilde{W}_q = \operatorname{split}(W_q, H) \in \mathbb{R}^{H \times d_k \times d_m}$ for the query projection matrix $W_q$. The same operation is applied for $W_k$ and $W_v$ to obtain $\Tilde{W}_k$ and $\Tilde{W}_v$.

\begin{figure}[t]
\vskip 0.2in
\centering
\includegraphics[width=0.48\textwidth]{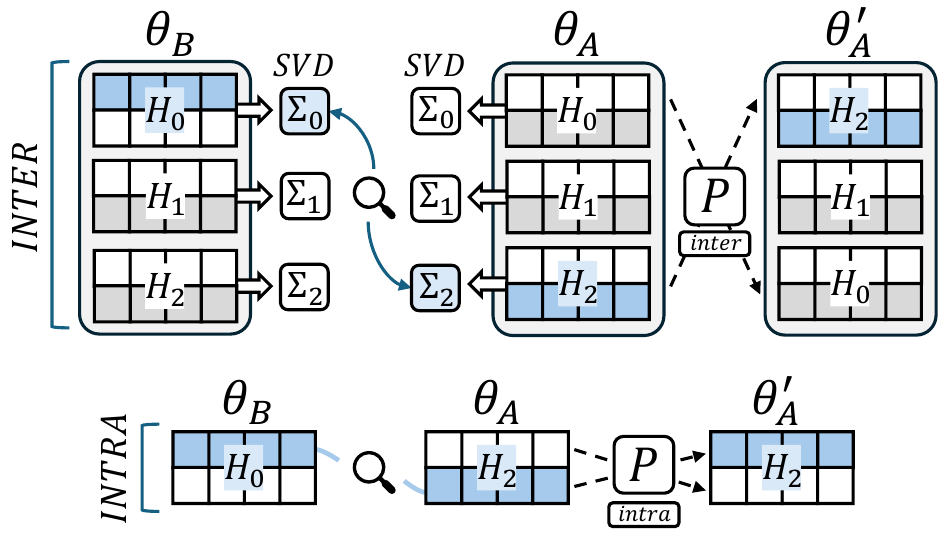}
\caption{Inter- (Step 1) and intra-head alignment (Step 2).}
\label{fig:heads}
\vskip -0.2in
\end{figure}

The first step involves defining a distance metric between pairs of heads, such that we can identify and execute the optimal swap between heads in $\theta_A$ and $\theta_B$ (see~\cref{fig:heads}). We employ a distance metric that is \textbf{invariant} to permutations of rows and columns within the $H$ sub-matrices in $\Tilde{W}_q$, $\Tilde{W}_k$, and $\Tilde{W}_v$. In this respect, one might question why invariance is crucial for comparisons between different heads. We note that the initial, natural order of units does not necessarily correspond to the optimal alignment that could be achieved. Consequently, the metric used in this initial phase must be insensitive to the specific ordering of head features, thereby ensuring an agnostic comparison of the heads.

To achieve the permutation-invariance property, we employ a distance based on the \textbf{singular values} of the sub-matrices representing the heads. Specifically, given two heads $h_i^B = [\Tilde{W}]_{i,:,:}^B \in \mathbb{R}^{d_k \times d_m}$ from model $\theta_B$ and $h_j^A = [\Tilde{W}]_{j,:,:}^A$ from model $\theta_A$, we compute the distance as:
\begin{equation}
\label{eq:distance_element}
d_{ij} = \left\| \Sigma_{i} - \Sigma_{j} \right\|,
\end{equation}
where $\Sigma_{i}$ and $\Sigma_{j}$ denote the singular values of $h_i^B$ and $h_j^A$ respectively. These can be computed through the Singular Value Decomposition (SVD); in formal terms, considering the $i$-th head, the SVD decompose its weight $h_i^B = U_i \Sigma_i V_i^T$, where $U_i$ and $V_i$ are orthogonal matrices, and $\Sigma_i$ is a diagonal matrix containing the singular values of $h_i$. As demonstrated in~\cref{sec:teorema}, the Euclidean distance between singular values remains invariant to permutations.

To take into account the distance for query, key and value projections jointly, we construct a distance matrix $D \in \mathbb{R}^{H \times H}$, where each element $D_{ij} = d_{ij}^q + d_{ij}^k + d_{ij}^v$ represents an inter-head alignment cost that is calculated as the sum of the pairwise distances across $q$, $k$ and $v$ matrices. We hence employ $D$ to find the optimal inter-head permutation:
\begin{equation}
P_{\text{inter\_head}} = \underset{P \in S_H}{\operatorname{argmin}}\sum_{i=1}^{H} D_{i, P[i]}, \label{eq:distance_matrix}
\end{equation}
where $D_{i, P[i]}$ is the distance between the $i$-th head of model $\theta_B$ and the $P[i]$-th (candidate) head of model $\theta_A$. The solution $P_{\text{inter\_head}}$ can be practically determined with the Hungarian algorithm. The corresponding permutation is applied to each $\Tilde{W}_q^A$, $\Tilde{W}_k^A$, and $\Tilde{W}_v^A$, thereby reordering the heads of $\theta_A$ to increase alignment with those of $\theta_B$.
\tit{Step 2: Intra-Head Alignment}\\
After matching each pair of heads $(h_i^B, h_{P[i]}^A)$, we aim to swap individual units between $h_i^B$ and $h_{P[i]}^A$. To do so, as in Git Re-Basin \cite{ainsworth2023git}, we seek for permutations that maximize the inner product between the corresponding $H$ sub-portions of the projection weights $h_i^B$ and $h_{P[i]}^A$, as follows:
\begin{align}
\label{eq:intra_head_obj}
P_{\text{intra\_head}} &= \{ P_{\text{intra\_head}}^{(i)}\}_{i=1}^{H}, \nonumber\\
\text{where} \  P_{\text{intra\_head}}^{(i)} &=\underset{P \in S_{d_k}}{\operatorname{argmax}} \left\langle h_i^B, P h_{P[i]}^A \right\rangle. 
\end{align}
In this formula, the cost considers the dot products across query, key, and value sub-matrices. In the end, the optimization yields $H$ permutation matrices $P_{\text{intra\_head}}^{(i)} \in \mathcal{S}_{d_k}$.

In summary, this two-step process ensures both global inter-head and local intra-head alignment while preserving the structural integrity of the multi-head attention mechanism. 
The comprehensive procedure for aligning multi-head attention layers, combining these alignment stages, is formalized in~\cref{mha_algo}. Specifically, these individual alignment steps are unified into a single composed permutation, denoted as $P_{\text{attn}}$, which is applied directly to the projection matrices of the multi-head attention layer. Crucially, it can be shown that our structured composed permutation preserves functional equivalence: despite reordering and permuting the heads and their internal dimensions, the attention computation remains unchanged (up to a corresponding permutation of its outputs). We formalize this in the following theorem.

\begin{theorem}[\textbf{Equivariance of Multi-Head Attention to Structured Permutations}]
\label{thm:attn-equiv}
Let $P_{\text{inter\_head}} \in S_H$ be a permutation over the $H$ attention heads, and let $P_{\text{intra\_head}} = \{ P_{\text{intra\_head}}^{(i)} \}_{i=1}^{H}$ be a set of independent permutations acting within each head (of size $d_k = \tfrac{d_m}{H}$). Then applying the composed block permutation $P_{\text{attn}}$ to each of the projection matrices $W_q, W_k, W_v \in \mathbb{R}^{d_m \times d_m}$ is functionally equivalent to permuting the output of the multi-head attention module. The resulting attention output $O'$ satisfies: $O' = O P_{\text{attn}} $, where $O$ is the unpermuted output.
\end{theorem}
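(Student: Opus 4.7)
The plan is to exploit the block-parallel structure of multi-head attention and reduce the theorem to two essentially orthogonal facts: (a) the attention logits $Q_i K_i^\top$ within a single head are invariant when the query and key projections of that head undergo the same row permutation, and (b) the outputs of the $H$ heads are concatenated along the feature axis, so swapping whole heads reshuffles the concatenated output in blocks.

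First I would unroll the multi-head attention on an input $X \in \mathbb{R}^{n \times d_m}$, writing, for each head, $Q_i = X h_{q,i}^\top$, $K_i = X h_{k,i}^\top$, $V_i = X h_{v,i}^\top$ and $\operatorname{head}_i = \operatorname{softmax}\!\bigl(Q_i K_i^\top / \sqrt{d_k}\bigr) V_i$, with the overall output $O = [\operatorname{head}_1 \mid \cdots \mid \operatorname{head}_H]$. I would then establish intra-head equivariance: applying $P^{(i)} := P_{\text{intra\_head}}^{(i)}$ to the rows of $h_{q,i}, h_{k,i}, h_{v,i}$ replaces $Q_i, K_i, V_i$ by $Q_i P^{(i)\top}, K_i P^{(i)\top}, V_i P^{(i)\top}$. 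Because $P^{(i)\top} P^{(i)} = I$, the updated logits collapse to $Q_i K_i^\top$ and the softmax output is unchanged, hence $\operatorname{head}_i' = \operatorname{head}_i P^{(i)\top}$: a single columnwise permutation applied after the head is computed. This is the step that would fail under head contamination: if the row permutation were not identical across $h_{q,i}, h_{k,i}, h_{v,i}$, or if it mixed rows from different heads, the $P^{(i)\top} P^{(i)} = I$ cancellation would break.

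Next I would incorporate the inter-head reshuffle. Since the $H$ head computations are independent and merely concatenated at the end, relabeling heads through $P_{\text{inter\_head}} \in S_H$ rearranges the $H$ column blocks of $O$ as wholes, representable by the block permutation matrix $\Pi_H = P_{\text{inter\_head}} \otimes I_{d_k} \in \mathbb{R}^{d_m \times d_m}$. Collecting the per-head permutations in $\Pi_K = \operatorname{diag}\bigl(P^{(1)\top}, \ldots, P^{(H)\top}\bigr)$, the full output transforms as $O' = O\, \Pi_H \Pi_K$. Setting $P_{\text{attn}} := \Pi_H \Pi_K$, which is a product of permutation matrices and thus itself a $d_m \times d_m$ permutation, yields $O' = O P_{\text{attn}}$; the same $P_{\text{attn}}$ applied as a row permutation to $W_q, W_k, W_v$ realizes the composite two-step reshuffle, closing the argument.

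The main obstacle, and the only genuinely delicate point, is bookkeeping the order of composition: one must ensure that the per-head permutations in $\Pi_K$ are indexed relative to the post-reshuffle head positions rather than the original ones, so that $\Pi_K$ is block-aligned with the heads produced by $\Pi_H$. Once this indexing convention is pinned down, functional equivalence follows immediately from the per-head logit invariance and the elementary closure of $S_{d_m}$ under matrix multiplication.
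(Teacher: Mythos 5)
Your argument is correct and follows essentially the same route as the paper's proof: the intra-head step relies on the same orthogonality cancellation $P^{(i)\top}P^{(i)} = I$ inside the softmax logits (the paper's Step~2), the inter-head step reshuffles whole column blocks of the concatenated output, and the combined permutation $P_{\text{attn}}$ is the same block matrix, merely written as the factorization $\Pi_H \Pi_K$ rather than the paper's Kronecker-sum form $\sum_i E^{i,\pi(i)} \otimes P^{(i)}_{\text{intra}}$. The indexing caveat you flag is real but is resolved exactly as you anticipate---the paper's formula carries $P^{\pi^{-1}(i)}_{\text{intra}}$ with its source head into position $i$, which is the ``post-reshuffle'' alignment you describe.
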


A complete proof of~\cref{thm:attn-equiv} is provided in~\cref{app:proof-attn-equiv}.

\newcommand{\algorithmname}[1]{\text{Multi-Head Attention Alignment (Alg. \ref{mha_algo})}}

\newlength\myindent
\setlength\myindent{1em}
\newcommand\bindent{%
  \begingroup
  \setlength{\itemindent}{\myindent}
  \addtolength{\algorithmicindent}{\myindent}
}
\newcommand\eindent{\endgroup}
\begin{algorithm}[t]
    \caption{Weight Matching}
    \label{global_algo}
    \begin{algorithmic}[1]
        \REQUIRE $\theta_A=\left\{W_{\ell}^{(A)}\right\}_{\ell=1}^{L}$ and $\theta_B=\left\{W_{\ell}^{(B)}\right\}_{\ell=1}^{L}$
        \smallskip
        \ENSURE A permutation $\pi = \left\{P_1, \ldots, P_{L-1}\right\}$ of $\theta_A$. %
        \smallskip
        \STATE \textbf{Initialize}: $P_1 \leftarrow I, \ldots, P_{L-1} \leftarrow I$
        \REPEAT
            \FOR{$\ell \in 1, \ldots, L-1$}
                \IF{$\ell$ is a MHA layer}
                    \STATE $P_{\ell} \leftarrow \algorithmname{mha_algo}$
                \ELSE
                    \STATE $P_{\ell} \leftarrow \text{Solving LAP as in \cref{eq:ff-rebasin}}$
                \ENDIF
            \ENDFOR
        \UNTIL{convergence}
    \end{algorithmic}
\end{algorithm}
\begin{algorithm}[t]
    \caption{Attention Alignment}
    \label{mha_algo}
    \begin{algorithmic}[1]
        \REQUIRE Weights $\Tilde{W}^{(A)}_q P_{\ell-1}^{\top}$, $\Tilde{W}^{(A)}_k P_{\ell-1}^{\top}$, $\Tilde{W}^{(A)}_v P_{\ell-1}^{\top} \in \theta_A$ and $\Tilde{W}^{(B)}_q$, $\Tilde{W}^{(B)}_k$, $\Tilde{W}^{(B)}_v \in \theta_B$ for multi-head attention projection layer $\ell$ and previous layer $\ell-1$.
        \smallskip
        \ENSURE Permutation $P_{\ell\text{-attn}}$ for $\Tilde{W}^{(A)}_q$, $\Tilde{W}^{(A)}_k$, $\Tilde{W}^{(A)}_v$.
        \smallskip

        \STATE \textbf{\textcolor{blue}{Step 1: Inter-Head Alignment}}
        \STATE Create spectral distance matrix $D$ (\cref{eq:distance_element}).
        \STATE $P_{\text{inter}} \leftarrow$ Solve LAP on $D$ (\cref{eq:distance_matrix}).

        \smallskip
        \STATE \textbf{\textcolor{blue}{Step 2: Intra-Head Alignment}}
        \FOR{$h = 1$ \textbf{to} $H$ head pairs from $P_{\text{inter}}$ }
            \STATE $P_{\text{intra}}^{(h)} \leftarrow$ Solve LAP for head pair $h$ (\cref{eq:intra_head_obj}).
        \ENDFOR

        \smallskip
        \STATE $P_{\ell\text{-attn}} \leftarrow P_{\text{inter}} \circ \big\{P_{\text{intra}}^{(h)}\big\}_{h=1}^H$ \ \ \ $\triangleright$ \text{compose permutations}
    \end{algorithmic}
\end{algorithm}

\tit{Step 3: Managing of Residual Connections} \\
Each transformer block incorporates two residual connections: the first bypasses the multi-head attention layer, and the second bypasses the feed-forward network:
\begin{equation}
\label{eq:resblock}
\begin{aligned}
\mathbf{z}_{attn} &= W_{0} \operatorname{MHA}(\mathbf{x}), \\
\mathbf{z}_i &= \mathbf{z}_{attn} + \mathbf{x}, \\ 
\mathbf{z}_f &= W_2 \text{ReLU}(W_1 \mathbf{z}_i), \\
\mathbf{z}_{out} &= \mathbf{z}_f + \mathbf{z}_i,
\end{aligned}
\end{equation}
where $\mathbf{x}$ is the input, $W_0$ is the weight of the attention mechanism, and $W_1$ and $W_2$ those of the feed-forward layer. For simplicity, we omit layer normalization as it can be regarded as a standard linear projection.

In each residual block, the input and the intermediate output are summed to produce the final output. However, we note that there are several sources of potential mismatch between the two addends: intuitively, if the two addends have undergone different permutations, it is reasonable to suspect a potential mismatch in their representations.

\setlength{\columnsep}{16pt}
\begin{wrapfigure}{r}{0.15\textwidth} %
\centering
\includegraphics[width=0.14\textwidth]{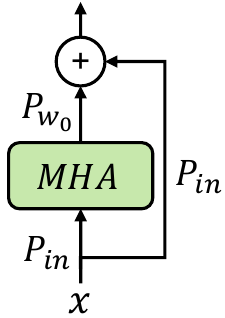}
\caption{Residuals block and permutations.}
\label{fig:residual}
\end{wrapfigure}

\definecolor{darkgreen}{RGB}{0, 150, 0}
\newcommand{\colheader}[1]{\multicolumn{2}{c}{\textbf{#1}}}
\newcommand{\metric}[1]{\textcolor{gray}{\textbf{#1}}}
\newcommand{\gain}[1]{\textcolor{darkgreen}{\textbf{#1}}}
\newcommand{\loss}[1]{\textcolor{gray}{\textbf{#1}}}
\newcommand{\dataset}[1]{[\textcolor{red}{\textbf{#1}}]}

\begin{table*}[t]
    \captionsetup{justification=centering}
    \caption{Comparison of permutation-based methods on visual tasks, in terms of task accuracy $[\uparrow]$ and support accuracy $[\uparrow]$.}
    \label{tab:gain_visual}
    \begin{center}
    \begin{small}
    \begin{sc}
    \begin{tabular}{l*{4}{>{}g>{}c}}
        \toprule
        \multirow{2}{*}{\textit{Method}} 
        & \multicolumn{2}{c}{\textbf{EuroSAT}} 
        & \multicolumn{2}{c}{\textbf{DTD}} 
        & \multicolumn{2}{c}{\textbf{GTSRB}} 
        & \multicolumn{2}{c}{\textbf{SVHN}} \\
        & Task & Supp. & Task & Supp. & Task & Supp. & Task & Supp. \\
        \midrule
        $\theta_{B}$ \textit{zero-shot}                         
        & $49.02$ & $68.73$ 
        & $47.50$ & $68.73$ 
        & $43.42$ & $68.73$ 
        & $45.97$ & $68.73$ \\
        
        $\theta_{B}$ + $\mathcal{\tau}$                         
        & -$7.62$ & -$16.15$ 
        & -$0.15$ & -$0.10$ 
        & -$5.39$ & -$0.70$ 
        & -$22.00$ & -$16.45$ \\
        
        $\theta_{B}$ + $\pi(\tau)$ \textit{(Optimal Transport)} 
        & -$14.05$ & -$5.28$ 
        & -$0.53$ & -$1.18$ 
        & -$2.43$ & -$1.30$ 
        & -$12.30$ & -$2.70$ \\
        
        $\theta_{B}$ + $\pi(\tau)$ \textit{(GiT Re-Basin)}       
        & +$0.95$ & -$0.48$ 
        & -$0.91$ & -$\mathbf{0.02}$ 
        & +$0.76$ & -$\mathbf{0.05}$ 
        & +$0.79$ & +$\mathbf{0.30}$ \\
        
        \midrule
        \textbf{TransFusion (Ours)}                             
        & +$\mathbf{4.95}$ & -$\mathbf{0.06}$ 
        & +$\mathbf{0.21}$ & -$0.08$ 
        & +$\mathbf{1.10}$ & -$0.40$ 
        & +$\mathbf{3.64}$ & -$0.48$ \\
        \bottomrule
    \end{tabular}
    \end{sc}
    \end{small}
    \end{center}
    \vskip -0.1in
\end{table*}

To clarify the interaction between permutations in residual blocks, consider~\cref{fig:residual}, which represents the first residual $\mathbf{z}_i = \mathbf{z}_{attn} + \mathbf{x}$. When the weights of the model are permuted, the input $\mathbf{x}$ comes with its own permutation $P_{\text{in}}$, which has to be accounted using $P_{\text{in}}^\top$. Moreover, the attention projection $W_0$ adds its own permutation matrix $P_{W_0}$. This leads to the following relation:
\begin{equation}
\mathbf{z}_i = P_{W_0} \mathbf{z}_{\text{attn}} + P_{\text{in}}\mathbf{x}.
\end{equation}
When examining the permutations/summations that impact on $\mathbf{z}_i$, there are two main issues: \textit{issue I)} the residual branch lacks transformations that could account for the matrix $P_{\text{in}}$; \textit{issue II)} the projection $W_{0}$ adds its own permutation $P_{W_{0}}$ of which the residual branch has no information about.

To maintain coherence between the two addends, they must be transformed under identical permutations. To enforce this consistency, we redefine the identity mapping made by the residual connection. We replace it with a composition, $\mathcal{I}_i = P_{W_{0}} P_{\text{in}}^\top$, consisting of two permutations --- one to address \textit{issue I} and another for \textit{issue II} --- as follows:
\begin{equation}
\mathbf{z}_i = P_{W_0} \mathbf{z}_{\text{attn}} + \mathcal{I}_i P_{\text{in}} \mathbf{x} = P_{W_0} \mathbf{z}_{\text{attn}} + P_{W_0}\mathbf{x},
\end{equation}
which highlights how the two addends now share the same permutation. An analogous process applies to the second residual connection yielding $\mathbf{z}_{\text{out}}$ (see~\cref{proof:residual} for the full procedure). As a final technical note, we remark that the permutation matrix $P_{W_2}$ associated to the second residual block in \cref{eq:resblock} has to be considered as input permutation for the subsequent layer.
\subsection{Transporting Task Vectors from $\theta_A$ to $\theta_B$}
\label{sec:transportation}
By applying $\pi$ to model $\theta_A$, we would have a functionally equivalent model $\theta_A' = \pi(\theta_A)$ with stronger linear-mode connectivity with $\theta_B$ compared to the original $\theta_A$. However, to allow knowledge transfer from the fine-tuned model $\theta_A^{ft} = \theta_{A} + \tau$ to $\theta_{B}$, we do not apply the permutations directly on $\theta_{A}$, but rather on the task vector $\tau$, as follows:
\begin{eqnarray}
\label{eq:transport}
\operatorname{task \ vector}:& \tau = \theta_A^{ft} - \theta_A,  \\
\operatorname{transport}:& \Tilde{\theta}_B^{ft} = \theta_{B} + \alpha \pi(\tau), 
\end{eqnarray} 
where $\alpha$ is a non-negative scaling factor~\cite{wortsman2022robust} modulating the influence of $\pi(\tau)$ on $\theta_{B}$.

By leveraging the concept of transporting task vectors, we have several notable advantages, especially in a scenario with multiple models fine-tuned on distinct tasks from the same base model $\theta_A$. In this scenario, the weight matching process between $\theta_A$ and $\theta_B$ needs to be conducted only \textbf{once}. Indeed, a permutation set $\pi$ can be established and reused to transfer any number of task vectors. This approach avoids the additional computational costs associated with learning separate transport functions for each transfer. Moreover, transporting multiple task vectors using the same reference model $\theta_A$ allows their combination at destination $\theta_B$, which basically means we could still apply model merging~\cite{wortsman2022model} after re-basin. %

\subsection{Complexity Analysis}
\label{sec:complexity}
In this subsection, we assess the computational complexity of the proposed weight matching procedure. The key insight is that the method is highly efficient compared to full re-training, and scales polynomially with model size.

\begin{proposition}
\label{prop:complexity}
Let $L$ be the number of layers and $d_m$ the embedding dimension of each transformer block. The overall computational complexity of our weight matching procedure is dominated by $O(L \, d_m^3)$. This complexity matches that of Git Re-Basin, making our approach comparably efficient in terms of computational cost.
\end{proposition}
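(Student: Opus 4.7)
The strategy is to enumerate the cost of each primitive that Algorithm~\ref{global_algo} executes at a single layer, show that it is $O(d_m^3)$ or less, and then multiply by the number $L$ of transformer blocks. Since the outer \texttt{repeat/until} loop terminates in a constant number of sweeps in the same regime as Git Re-Basin (empirically $O(1)$), it does not affect the asymptotic bound. I would organize the analysis into three groups: feed-forward layers, inter-head alignment, and intra-head alignment.

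\textbf{Per-block analysis.} For an FFN layer, building the bilinear cost in \cref{eq:ff-rebasin} amounts to a product of two $d_m \times d_m$ matrices, at cost $O(d_m^3)$, and solving the resulting $d_m \times d_m$ LAP by the Hungarian algorithm is also $O(d_m^3)$. For the inter-head step, each of the three projections $\tilde W_q, \tilde W_k, \tilde W_v$ is split into $H$ blocks of shape $d_k \times d_m$; the SVD of each block costs $O(d_k^2 d_m)$, so the total SVD work is $O(3H \cdot d_k^2 d_m) = O(d_m^3/H)$. Populating the spectral distance matrix $D \in \mathbb{R}^{H\times H}$ costs $O(H^2 d_k)$, and the Hungarian step on $D$ is $O(H^3)$; both are dominated by $d_m^3$ whenever $H \le d_m$. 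For intra-head alignment, each of the $H$ matched head pairs requires a $d_k \times d_k$ inner-product cost matrix aggregated over Q, K, V, at $O(d_k^2 d_m)$, and a Hungarian solve at $O(d_k^3)$; summing over the $H$ pairs gives $O(H \cdot d_k^2 d_m) = O(d_m^3/H)$. Thus the MHA work per block is $O(d_m^3/H) + O(H^3)$, which is subdominant to the FFN's $O(d_m^3)$ under the mild assumption $H^3 = O(d_m^3)$.

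\textbf{Assembly and comparison.} Summing the per-block cost over the $L$ layers yields $O(L\,d_m^3)$, and since the outer loop converges in a constant number of passes, the overall complexity is $O(L\,d_m^3)$. For the comparison with Git Re-Basin, each layer there is treated as a single $d_m \times d_m$ linear map and incurs the same $O(d_m^3)$ cost for cost-matrix assembly plus LAP solve, giving $O(L\,d_m^3)$ overall; our additional spectral and $H\times H$ LAP overhead is strictly subdominant, establishing the claimed parity.

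\textbf{Main obstacle.} The only non-routine part is justifying that the outer fixed-point iteration in Algorithm~\ref{global_algo} does not add a non-constant multiplicative factor: the LAP subroutines are not guaranteed to converge in one sweep, and a worst-case analysis could in principle introduce a $\operatorname{poly}(L, d_m)$ blow-up. I would address this by appealing to the same empirical convergence behavior reported for Git Re-Basin's coordinate-descent over $\{P_\ell\}$, treating the number of sweeps as $O(1)$; a formal bound would require a monotonicity/potential-function argument which, as in prior work, is beyond the scope of a complexity proposition.
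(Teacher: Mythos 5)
Your decomposition—per-layer FFN cost-matrix plus LAP at $O(d_m^3)$, inter-head SVDs at $O(d_m^3/H)$ with $O(H^3)$ Hungarian on the $H\times H$ distance matrix, intra-head cost matrices and LAPs summing to $O(d_m^3/H)$, then multiplying by $L$—is the same accounting the paper carries out, down to the observation that all attention-side terms are subdominant to the FFN's $O(d_m^3)$ when $H\le d_m$. Your caveat about the outer fixed-point loop is fair but matches the paper's own implicit assumption (the appendix only bounds a single sweep), so the two proofs share that gap rather than diverge on it.
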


The proof is provided in Appendix~\ref{app:complexity} and illustrates the per-layer contribution of both MLP and attention blocks.

\section{Experiments}

This section is structured into three main parts. Initially, we empirically assess the transportation of task vectors, involving extensive experiments across both visual and natural language processing (NLP) tasks (\cref{sec:taskvectortransport}). Subsequently, we examine the capability of our methodology to align the weights of two Transformer models while maintaining functional equivalence (\cref{sec:weightalignment}). Finally, several ablative studies show the impact of our techniques on addressing multi-head attention layers and residual connections (\cref{sec:ablation}).
\subsection{\methname{} of Task Vectors}
\label{sec:taskvectortransport}

\tit{Visual Classification Tasks.} As reference architecture, we consider the CLIP ViT-B/16 Vision Transformer~\cite{radford2021learning} from Open-CLIP~\cite{cherti2023reproducible}. We refer to $\theta_A$ as the original pre-training weights and $\theta_B$ as those used for the re-basin. We use CommonPool pre-training for $\theta_A$ and Datacomp for $\theta_B$, both cited in~\cite{gadre2024datacomp}. 

Considering the base model $\theta_A$, we fine-tune the corresponding model on several computer vision tasks~\cite{radford2021learning, ilharcoediting}. We employ DTD~\cite{cimpoi2014describing}, EuroSAT~\cite{helber2019eurosat}, GTSRB~\cite{stallkamp2011german}, and SVHN~\cite{netzer2011reading} and obtain multiple, independent fine-tuned models like $\theta_A^{ft} = \theta_A + \tau$. Afterwards, we empirically assess the transportation of $\tau$ to the new weights $\theta_B$. In this respect, we adopt two metrics to characterize the quality of the transported model $\theta_B + \pi(\tau)$: \textit{i)} the zero-shot performance on the original task (\textit{specialized knowledge}), and \textit{ii)} the zero-shot performance on a support, unseen set to evaluate the preservation of \textit{broader capabilities}. In our experiments, ImageNet-R~\cite{hendrycks2021many} serves as a support dataset.

We report the results in~\cref{tab:gain_visual} as drops (-) or gains (+) in accuracy relative to the zero-shot performance of \(\theta_B\). As baselines, we provide the results of \textit{vanilla transportation} (no permutations applied on $\tau$) and those of Git Re-Basin~\cite{ainsworth2023git} and Optimal Transport (OT)~\cite{imfeld2024transformer}, two existing methods for model re-basin. Specifically, the comparison with OT is noteworthy since this approach is designed for Transformer models (like ours). 

As can be seen, our method enhances zero-shot performance on the downstream tasks and preserves generalization on the support dataset, outperforming existing permutation-based methods. Considering the results of our approach, it is particularly noteworthy that we enhance performance on the downstream task while maintaining generalization, all achieved without the use of any data.

In the experiments shown in~\cref{tab:gain_visual}, we consistently  set the scaling coefficient for the (permuted) task vector as $\alpha=1$ (see \cref{eq:transport}). To investigate sensitivity and performance changes while varying $\alpha$, we kindly refer the reader to~\cref{fig:alpha} (more datasets are in~\cref{sec:app_other_datasets}). This illustrates the drop/gain in accuracy for $\theta_B + \alpha \tau$ (\textcolor{blue}{blue}) and our $\theta_B + \alpha \pi(\tau)$ (\textcolor{red}{red}). This drop/gain is measured w.r.t.\ the zero-shot accuracy of $\theta_B$, and \(\alpha\) varies within the range $[0.01, 2.0]$. The outcome is that applying the permuted $\pi(\tau)$ to $\theta_B$ leads to tangible improvements in the downstream task (top row), especially $\alpha \approx 1$. Moreover, when $\alpha \geq 0.5$, the permuted task vector is considerably more reliable in terms of generalization (higher accuracy on the support set).

\begin{figure}[t]
\vskip 0.2in
\begin{center}
\centering\includegraphics[width=.45\textwidth]{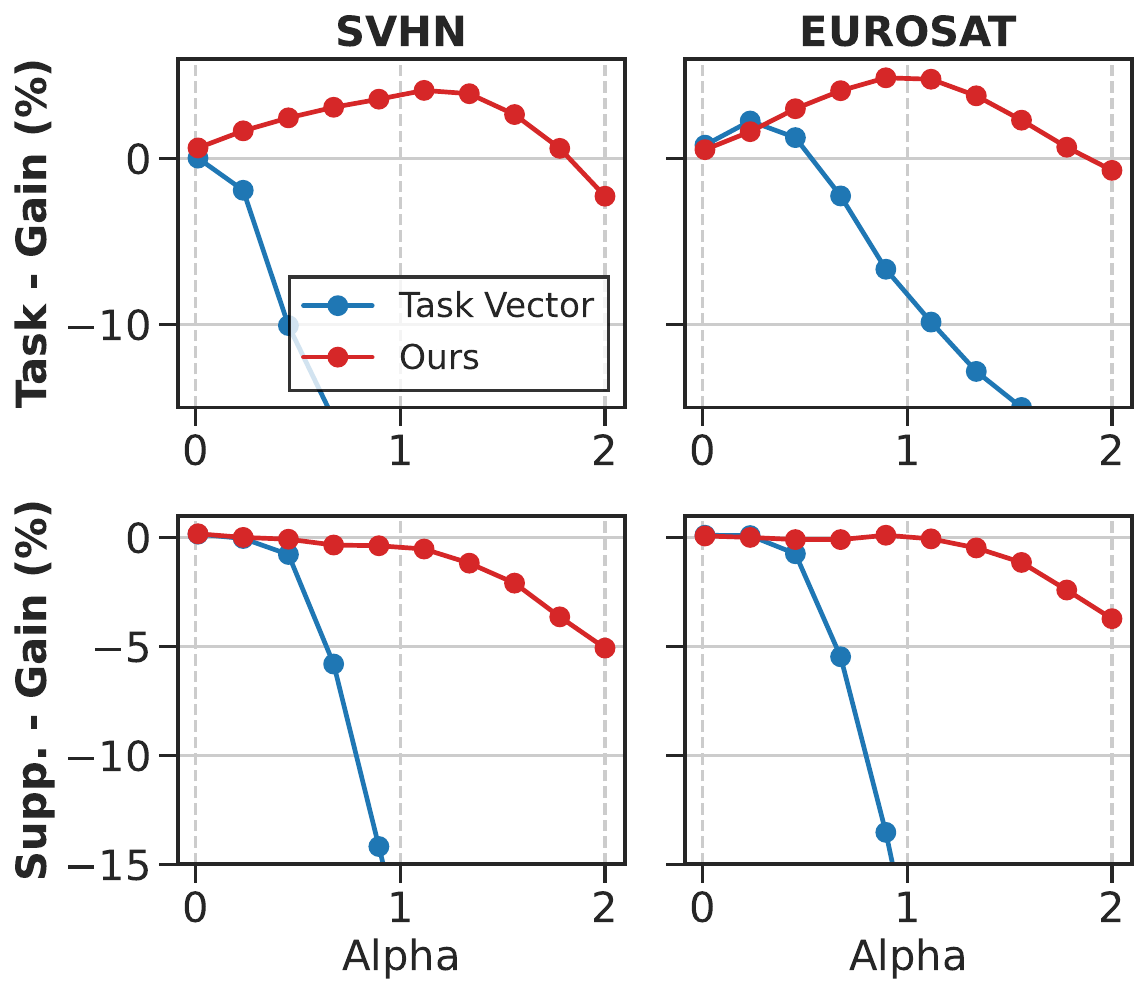}
\caption{Zero-shot gain/drop relative to $\theta_B$ of naive $\theta_B+\alpha\tau$ (\textcolor{blue}{blue}) and our strategy $\theta_B+\alpha\pi(\tau)$ (\textcolor{red}{red}) varying $\alpha$.}
\label{fig:alpha}
\end{center}
\vskip -0.2in
\end{figure}

\begin{table}[t]
    \caption{Comparison of permutation-based methods on NLP tasks, in terms of task accuracy $[\uparrow]$.}
    \label{tab:gain_pokemon}
    \sisetup{
        table-format=2.2(2),
        separate-uncertainty=false,
        table-align-uncertainty=true,
        detect-weight,
    }
    \setlength{\tabcolsep}{3pt} %
    \begin{center}
    \begin{small}
    \begin{sc}
    \begin{tabular}{lgcgc}
        \toprule
        \textit{Method}                                        & \textbf{QQP} & \textbf{SST2} & \textbf{RTE} & \textbf{CoLA} \\
        \midrule
        $\theta_{B}$                                           & $55.00$        & $50.69$         & $54.51$        & $40.94$         \\
        $\theta_{B}$ + $\mathcal{\tau}$                        & -$8.29$        & +$0.23$         & -$2.53$        & -$0.77$         \\
        $\theta_{B}$ + $\mathcal{\tau}$ \textit{(OT)}          & -$8.31$        & +$5.39$         & -$1.08$        & -$1.25$         \\
        $\theta_{B}$ + $\mathcal{\tau}$ \textit{(GiT Re-Basin)} & +$3.58$        & +$5.73$         & +$2.17$        & +$1.44$         \\
        \midrule
        \textbf{TransFusion (Ours)}                            & +$\mathbf{6.50}$ & +$\mathbf{5.96}$  & +$\mathbf{3.61}$ & +$\mathbf{2.49}$  \\
        \bottomrule
    \end{tabular}
    \end{sc}
    \end{small}
    \end{center}
\end{table}

\tit{NLP Classification Tasks.} Herein, we investigate a different setting that involves closed-vocabulary text classification --- specifically, a set of tasks from the GLUE benchmark~\cite{wang2018glue}. We consider a model $\theta=\{\phi, \omega\}$ composed of a pre-trained Transformer encoder $\phi$ and a classification head $\omega$. We then evaluate the transport of the learned task vector $\tau_\phi=\phi^{ft}_A-\phi_A$ on a new feature extractor $\phi_B$. As access to data of the downstream task is restricted, we are unable to train a new classifier for $\theta_B$: consequently, we re-use the originally fine-tuned classifier, denoted as $\omega^{ft}$. The goal is to evaluate whether transporting the task vector $\tau_\phi$ aligns the representation yielded by $\phi_B + \pi(\tau_\phi)$ with the original, fine-tuned classifier $\omega^{ft}$. In our experiments, we employed two variants of the ViT-B-16 text encoder, pretrained respectively on the \texttt{commonpool-l-s1b-b8k} ($\theta_A$) and \texttt{datacomp-l-s1b-b8k} ($\theta_B$)~\cite{gadre2024datacomp}.

\cref{tab:gain_pokemon} presents the evaluation for the GLUE benchmark. Unexpectedly, simply applying the classification head from the original feature extractor $\phi_A$ yields poor performance (see first line of~\cref{tab:gain_pokemon}, $\theta_B$). %
On the other hand, transporting $\tau_\phi$ with Git Re-Basin and Optimal Transport performs reasonably, with good gains on QQP and SST2. Moreover, our approach leads to the highest and more consistent performance gains, highlighting the potential of our framework.
\subsection{\methname{} Improves Alignment and Preserves Functional Equivalence}
\label{sec:weightalignment}
\begin{figure}[t]
\vskip 0.2in
\centering
\includegraphics[width=0.46\textwidth]{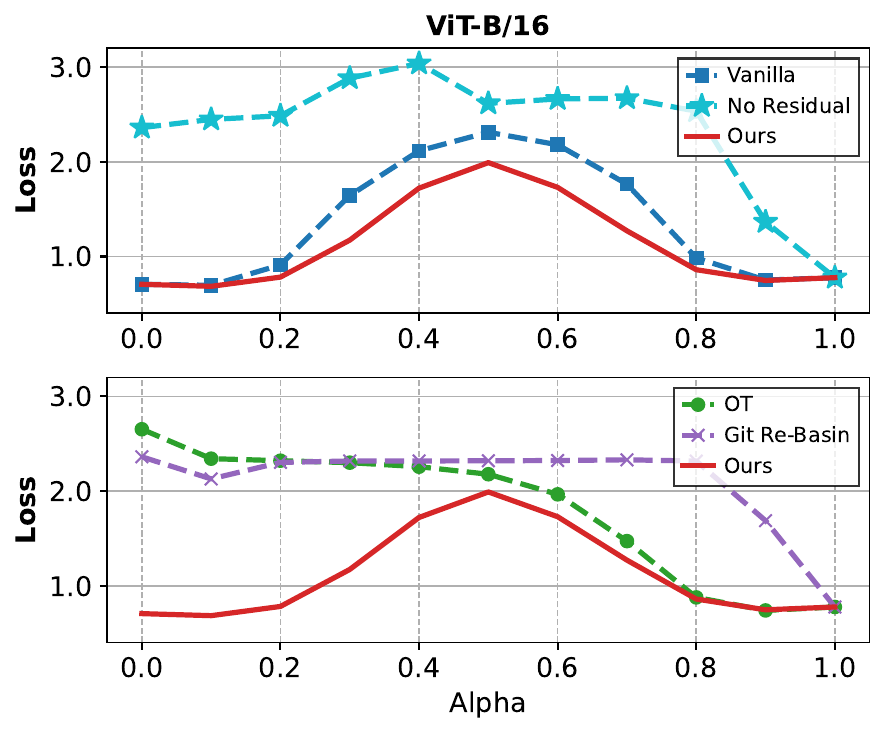}
\caption{Loss values on CIFAR-10 test set during model interpolation. Top: Our permutation approach vs. vanilla interpolation and no residual variant. Bottom: Comparison with Optimal Transport and Git Re-Basin methods, which fail to preserve functional equivalence as $\alpha \rightarrow 0$.}
\vskip -0.2in
\label{fig:barrier}
\end{figure}

While the previous analyses focus on transferring task vectors, we now delve into the effectiveness of our approach in terms of weight alignment. In detail, we consider two ViT-B/16 models~\cite{dosovitskiy2020image} $A$ and $B$ trained independently on CIFAR-10~\cite{krizhevsky2009learning} from scratch, which means they underwent different initializations and batch orders. After training, we apply our permutation strategy to $\theta_A$ and analyze the resulting alignment of $\pi(\theta_A)$ and $\theta_B$ in terms of \textit{linear mode connectivity}: following~\cite{ainsworth2023git}, we evaluate the loss landscape $\mathcal{L}\left((1-\alpha) \pi\left(\theta_A\right) + \alpha \theta_B\right), \alpha \in [0,1]$ while interpolating between the two models. \\
As shown by~\cref{fig:barrier} (top), applying the permutation $\pi$ yields an interpolated model that exhibits consistent lower loss compared to the vanilla approach that does not permute $\theta_A$ ($\pi(\theta_A) = \theta_A$). Moreover, the \textit{no residual} approach underscores the critical role of properly handling residual connections in our method --- both during the permutation of $\theta_A$ and throughout the interpolation process between models.
Similarly, in~\cref{fig:barrier} (bottom) we assess the loss landscape using $\pi$ derived from the Optimal Transport (OT) hard alignment method~\cite{imfeld2024transformer} and Git Re-Basin~\cite{ainsworth2023git}. While these works show a more favorable loss landscape than naive interpolation, we observe that the resulting interpolated model struggles and achieves high loss when $\alpha \to 0$, highlighting that both OT and Git Re-Basin do not preserve functional equivalence. We conjecture that such a lack stems from potential weaknesses in effectively permuting layers featuring residual connections and multi-head attention blocks. In virtue of the results achieved by our approach, we can claim it represents the first successful data-free method to interpolate between two Transformer models in weight space, while ensuring the functional equivalence of $\pi(\theta_A)$.
\subsection{Ablative Analysis}
\label{sec:ablation}
\tit{On the Strategy to Manage Multiple Heads.} We herein explore the significance of an appropriate policy for permuting the projection layers within multi-head attention mechanisms. Specifically, we present the outcomes of transferring $\tau$ with varying strategies to permute attention projection layers. As potential alternatives, we firstly consider \textbf{brute force alignment}, which considers all possible head pair combinations within each attention layer. Then, for each candidate pair, the \textit{intra-head} alignment cost is computed by optimizing the objective in \cref{eq:intra_head_obj}. The final permutation is then derived with the Hungarian algorithm, which selects pairs with the highest intra-alignment scores.

\begin{table}[t]
    \caption{TransFusion results by head alignment strategy.}
    \label{tab:head_align}
    \setlength{\tabcolsep}{3pt}
    \sisetup{
        table-format=2.2,
        separate-uncertainty=false,
        table-align-uncertainty=true,
        detect-weight,
    }
    \begin{center}
    \begin{small}
    \begin{sc}
    \setlength{\tabcolsep}{2pt} %
    \begin{tabular}{l*{3}{>{}g>{}c}}
        \toprule
        \multirow{2}{*}{\textit{Head Align.}} 
        & \multicolumn{2}{c}{\textbf{EuroSAT}} 
        & \multicolumn{2}{c}{\textbf{GTSRB}} 
        & \multicolumn{2}{c}{\textbf{SVHN}} \\
        & Task & Supp. & Task & Supp. & Task & Supp. \\
        \midrule
        $\theta_{B}$ \textit{zero-shot}       
        & $49.02$ & $68.73$ 
        & $43.42$ & $68.73$ 
        & $45.97$ & $68.73$ \\
        
        \textit{Brute Force}                 
        & +$1.32$ & -$0.21$ 
        & +$0.60$ & -$0.46$ 
        & +$3.39$ & -$0.40$ \\
        
        \textit{No Att-Align}                
        & +$2.22$ & -$0.47$ 
        & +$0.71$ & +$\mathbf{0.05}$ 
        & +$0.24$ & -$\mathbf{0.08}$ \\
        
        \midrule
        \textbf{Ours (Full)}                 
        & +$\mathbf{4.95}$ & -$\mathbf{0.06}$ 
        & +$\mathbf{1.10}$ & -$0.40$ 
        & +$\mathbf{3.64}$ & -$0.48$ \\
        \bottomrule
    \end{tabular}
    \end{sc}
    \end{small}
    \end{center}

\end{table}

After, we compare our approach with one that pairs heads in  $A$ and $B$ according to their natural order, thereby avoiding \textit{head contamination} by design. Nevertheless, the preservation of original head ordering comes at the cost of ignoring functional mismatches between attention units. We refer to this further baseline as \textbf{no attention alignment}.

The results of these experiments are detailed in \cref{tab:head_align}. Our attention-alignment strategy achieves superior performance on the downstream task ($\theta_B + \pi(\tau)$) compared to alternative approaches, while maintaining comparable zero-shot capabilities. The comparison with the brute force approach underscores the effectiveness of our permutation-invariant costs in modeling inter-head relationships, demonstrating superior performance over a brute force alignment that optimizes for the best match within each candidate pair of heads. Furthermore, our results suggest that preserving the original head ordering (as in the no-alignment strategy) yields better performance than brute-force inter-head matching for two of the three experimental tasks. This underscores the menace of head contamination and the importance of structure-aware alignment methodologies over indiscriminate similarity maximization.
\begin{table}[t]
    \caption{Fine-tuning results with permuted task vectors.}
    \label{tab:fewshot}
    \begin{center}
    \begin{small}
    \begin{sc}
    \begin{tabular}{lgcgc}
        \toprule
        \textit{Method} & \textbf{EuroSAT} & \textbf{DTD} & \textbf{GTSRB} & \textbf{SVHN} \\
        \midrule
        $\theta_{B}$ \textit{zero-shot}  & $49.02$ & $47.50$ & $43.42$ & $45.97$ \\
        $\theta_{B}+\alpha\tau$ & +$7.93$ & -$1.44$ & +$4.70$ & -$15.98$ \\
        $\theta_{B}+\alpha\pi(\tau)$ & +$\mathbf{10.00}$ & +$\mathbf{1.21}$ & +$\mathbf{6.80}$ & +$\mathbf{10.52}$ \\
        \bottomrule
    \end{tabular}
    \end{sc}
    \end{small}
    \end{center}
    \vskip -0.1in
\end{table}

\tit{Few Shot Fine-tuning.}There are practical scenarios in which retaining data is infeasible. If such constraints are not present, our proposed method can be effectively combined with fine-tuning. To illustrate this, we follow \cite{zhang2024knowledge} and start with a small subset consisting of $10$ shots per class, learning a scaling coefficient per layer, denoted as $\alpha = [\alpha_1, \dots, \alpha_{|L|}]$. The results in \cref{tab:fewshot} clearly indicate a substantial improvement when fine-tuning a model that has undergone re-basin using our approach, represented as $\theta_{B}+\alpha\pi(\tau)$. In contrast, fine-tuning directly from $\theta_{B}+\alpha\tau$, without permutation, yields inferior outcomes. This emphasizes that re-basing and fine-tuning should not be considered mutually exclusive but complementary strategies. 

\section{Related Work}
\label{sec:related}
\textbf{\textit{Mode Connectivity}} occurs when paths of nearly constant loss connect different solutions within the loss landscape of neural networks (NNs)~\cite{garipov2018loss, freeman2017topology, garipov2018loss, draxler2018essentially}. When such paths are linear, we refer to \textit{linear} mode connectivity (LMC)~\cite{frankle2020linear}.~\citet{entezari2022the} conjecture that solutions found by Stochastic Gradient Descent (SGD) can be linearly connected when accounting for permutation symmetries. Motivated by this, several works first align the models into a shared optimization space by permuting their neurons, and then merge them through a simple average~\citep{ainsworth2023git, jordan2023repair, stoica2024zipit, pena2023re, crisostomi2024c, navon2023equivariant, singh2020model}. Most relevant to our work,~\citet{imfeld2024transformer} applies optimal transport to align activations in transformer-based networks. However, unlike the latter, our method accounts for the logical division of multi-head attention projections, preserving the functional equivalence of the permuted models.

\textbf{\textit{Weight Interpolation and Task Arithmetic.}}
Emerging research reveals that the output of NNs can be manipulated through algebraic operations in weight space \cite{ilharco2022patching, wortsman2022model}. Central to this paradigm are task vectors $\tau$~\cite{ilharcoediting}, which encode task-specific knowledge and exhibit compositional properties: combining vectors through addition enables multi-task generalization while their negation can suppress learned behaviors without significantly impacting unrelated tasks.

Beyond arithmetic, weight interpolations further unlock unexpected capabilities: blending fine-tuned and pre-trained weights often yields single-task performance superior to standalone fine-tuning~\cite{frankle2020linear, izmailov2018averaging, matena2022merging,rame2022recycling,rame2022diverse,wortsman2022robust}, suggesting a reconciliation of specialized adaptation with generalization capabilities. Multi-task merging via parameter averaging~\cite{ilharco2022patching,ilharcoediting, wortsman2022model, yadav2024ties} not only circumvents catastrophic forgetting~\cite{french1999catastrophic, mccloskey1989catastrophic, porrello2024second} but synthesizes models that retain diverse expertise, even serving as superior starting points for future adaptation~\cite{choshen2022fusingfinetunedmodelsbetter}. 
The benefits of weight ensembles and interpolations extend beyond just fine-tuned models; they also apply to models that are trained from scratch. Techniques such as those proposed by~\cite{ainsworth2023git, singh2020model}, leverage permutation symmetries to facilitate coherent interpolation between models trained in different ways. Collectively, these findings position weight-space manipulation as a scalable toolkit for resource-efficient model engineering, where arithmetic and interpolation replace brute-force retraining.

\section{Discussion and Conclusions}
For \methname\ to succeed, the source expert must perform strongly on the target task.
This insight directly explains the diminished results on DTD in~\cref{tab:gain_visual}: while our fine-tuning achieves performance well above $95\%$ on all other datasets, DTD reaches only $75\%$ accuracy. We attribute this degraded task vector quality to DTD's inherently challenging characteristics --- featuring merely 40 examples per class on average, compared to roughly 1,000 examples per class in our other datasets.
Although our proposal delivers substantial gains without requiring access to data from the original task, we acknowledge significant room for improvement. We believe most of the lost performance could be recovered through two complementary directions: incorporating a modest amount of additional data, or employing more sophisticated matching metrics for permutation discovery that better reflect activation distributions rather than relying on simple dot product similarity for linear layers. Such extensions represent promising avenues for future investigation.

\section*{Acknowledgments}
This work was supported by the Italian Ministerial grants PRIN 2022: "B-Fair: Bias-Free Artificial Intelligence Methods for Automated Visual Recognition" (CUP E53D23008010006) and by the University of Modena and Reggio Emilia and Fondazione di Modena through the "Fondo di Ateneo per la Ricerca - FAR 2024" (CUP E93C24002080007). The work also received funding from DECIDER, the European Union’s Horizon 2020 research and innovation programme under GA No. 965193 and “AIDA: explAinable multImodal
Deep learning for personAlized oncology” (Project Code
20228MZFAA). We acknowledge the CINECA award under the ISCRA initiative for providing high-performance computing resources.

\section*{Impact Statement}

Our approach to transferring fine-tuning between model versions without requiring re-training or additional data holds the potential to significantly lower the barriers to maintaining state-of-the-art AI technologies. By enabling seamless updates, this method could empower organizations, especially those with limited resources, to keep pace with technological advancements, thereby democratizing access to sophisticated AI tools. This democratization can foster innovation across diverse sectors, potentially leveling the playing field between large entities and smaller, resource-constrained organizations.

The idea of aligning models with minor variations in architecture, such as different layer counts, is also worth exploring. One simple approach could involve selectively pruning layers from the model with more layers to match its counterpart—removing redundant or unimportant layers—while an alternative strategy might replicate the last block of the smaller network multiple times to achieve parity. We plan to investigate these strategies further in future work.

However, as we facilitate easier transitions between model versions, it is crucial to ensure that these updates do not compromise the integrity of the models. Dependence on outdated or poorly validated models poses significant risks, particularly when used in critical applications. Therefore, it is imperative that as this technology is adopted, continuous efforts are made to monitor, validate, and refine these models to safeguard against biases and errors that may arise from rapid model evolution. Future research should focus on developing robust frameworks for evaluating the ethical implications, fairness, and transparency of AI models as they evolve, ensuring that advancements in AI technology are implemented responsibly and ethically.

\bibliography{main}

\begin{thebibliography}{44}
\providecommand{\natexlab}[1]{#1}
\providecommand{\url}[1]{\texttt{#1}}
\expandafter\ifx\csname urlstyle\endcsname\relax
  \providecommand{\doi}[1]{doi: #1}\else
  \providecommand{\doi}{doi: \begingroup \urlstyle{rm}\Url}\fi

\bibitem[Ainsworth et~al.(2023)Ainsworth, Hayase, and Srinivasa]{ainsworth2023git}
Ainsworth, S., Hayase, J., and Srinivasa, S.
\newblock {Git Re-Basin: Merging Models modulo Permutation Symmetries}.
\newblock In \emph{International Conference on Learning Representations}, 2023.

\bibitem[Cherti et~al.(2023)Cherti, Beaumont, Wightman, Wortsman, Ilharco, Gordon, Schuhmann, Schmidt, and Jitsev]{cherti2023reproducible}
Cherti, M., Beaumont, R., Wightman, R., Wortsman, M., Ilharco, G., Gordon, C., Schuhmann, C., Schmidt, L., and Jitsev, J.
\newblock {Reproducible scaling laws for contrastive language-image learning}.
\newblock In \emph{Proceedings of the IEEE conference on Computer Vision and Pattern Recognition}, 2023.

\bibitem[Choshen et~al.(2022)Choshen, Venezian, Slonim, and Katz]{choshen2022fusingfinetunedmodelsbetter}
Choshen, L., Venezian, E., Slonim, N., and Katz, Y.
\newblock Fusing finetuned models for better pretraining.
\newblock \emph{arXiv preprint arXiv:2204.03044}, 2022.

\bibitem[Cimpoi et~al.(2014)Cimpoi, Maji, Kokkinos, Mohamed, and Vedaldi]{cimpoi2014describing}
Cimpoi, M., Maji, S., Kokkinos, I., Mohamed, S., and Vedaldi, A.
\newblock {Describing Textures in the Wild}.
\newblock In \emph{Proceedings of the IEEE conference on Computer Vision and Pattern Recognition}, 2014.

\bibitem[Crisostomi et~al.(2024)Crisostomi, Fumero, Baieri, Bernard, and Rodola]{crisostomi2024c}
Crisostomi, D., Fumero, M., Baieri, D., Bernard, F., and Rodola, E.
\newblock {C\textsuperscript{2}M\textsuperscript{3}: Cycle-Consistent Multi-Model Merging}.
\newblock \emph{Advances in Neural Information Processing Systems}, 2024.

\bibitem[Dosovitskiy et~al.(2021)Dosovitskiy, Beyer, Kolesnikov, Weissenborn, Zhai, Unterthiner, Dehghani, Minderer, Heigold, Gelly, Uszkoreit, and Houlsby]{dosovitskiy2020image}
Dosovitskiy, A., Beyer, L., Kolesnikov, A., Weissenborn, D., Zhai, X., Unterthiner, T., Dehghani, M., Minderer, M., Heigold, G., Gelly, S., Uszkoreit, J., and Houlsby, N.
\newblock {An Image is Worth 16x16 Words: Transformers for Image Recognition at Scale}.
\newblock In \emph{International Conference on Learning Representations}, 2021.

\bibitem[Draxler et~al.(2018)Draxler, Veschgini, Salmhofer, and Hamprecht]{draxler2018essentially}
Draxler, F., Veschgini, K., Salmhofer, M., and Hamprecht, F.
\newblock {Essentially No Barriers in Neural Network Energy Landscape}.
\newblock In \emph{International Conference on Machine Learning}, 2018.

\bibitem[Entezari et~al.(2022)Entezari, Sedghi, Saukh, and Neyshabur]{entezari2022the}
Entezari, R., Sedghi, H., Saukh, O., and Neyshabur, B.
\newblock {The Role of Permutation Invariance in Linear Mode Connectivity of Neural Networks}.
\newblock In \emph{International Conference on Learning Representations}, 2022.

\bibitem[Frankle et~al.(2020)Frankle, Dziugaite, Roy, and Carbin]{frankle2020linear}
Frankle, J., Dziugaite, G.~K., Roy, D., and Carbin, M.
\newblock {Linear Mode Connectivity and the Lottery Ticket Hypothesis}.
\newblock In \emph{International Conference on Machine Learning}, 2020.

\bibitem[Freeman \& Bruna(2017)Freeman and Bruna]{freeman2017topology}
Freeman, C.~D. and Bruna, J.
\newblock {Topology and Geometry of Half-Rectified Network Optimization}.
\newblock In \emph{International Conference on Learning Representations}, 2017.

\bibitem[French(1999)]{french1999catastrophic}
French, R.~M.
\newblock {Catastrophic forgetting in connectionist networks}.
\newblock \emph{Trends in Cognitive Sciences}, 3\penalty0 (4), 1999.

\bibitem[Gadre et~al.(2024)Gadre, Ilharco, Fang, Hayase, Smyrnis, Nguyen, Marten, Wortsman, Ghosh, Zhang, et~al.]{gadre2024datacomp}
Gadre, S.~Y., Ilharco, G., Fang, A., Hayase, J., Smyrnis, G., Nguyen, T., Marten, R., Wortsman, M., Ghosh, D., Zhang, J., et~al.
\newblock {Datacomp: In search of the next generation of multimodal datasets}.
\newblock \emph{Advances in Neural Information Processing Systems}, 2024.

\bibitem[Garipov et~al.(2018)Garipov, Izmailov, Podoprikhin, Vetrov, and Wilson]{garipov2018loss}
Garipov, T., Izmailov, P., Podoprikhin, D., Vetrov, D.~P., and Wilson, A.~G.
\newblock {Loss Surfaces, Mode Connectivity, and Fast Ensembling of DNNs}.
\newblock \emph{Advances in Neural Information Processing Systems}, 2018.

\bibitem[Helber et~al.(2019)Helber, Bischke, Dengel, and Borth]{helber2019eurosat}
Helber, P., Bischke, B., Dengel, A., and Borth, D.
\newblock {EuroSAT: A Novel Dataset and Deep Learning Benchmark for Land Use and Land Cover Classification}.
\newblock \emph{IEEE Journal of Selected Topics in Applied Earth Observations and Remote Sensing}, 12\penalty0 (7), 2019.

\bibitem[Hendrycks et~al.(2021)Hendrycks, Basart, Mu, Kadavath, Wang, Dorundo, Desai, Zhu, Parajuli, Guo, Song, Steinhardt, and Gilmer]{hendrycks2021many}
Hendrycks, D., Basart, S., Mu, N., Kadavath, S., Wang, F., Dorundo, E., Desai, R., Zhu, T., Parajuli, S., Guo, M., Song, D., Steinhardt, J., and Gilmer, J.
\newblock {The Many Faces of Robustness: A Critical Analysis of Out-of-Distribution Generalization}.
\newblock \emph{IEEE International Conference on Computer Vision}, 2021.

\bibitem[Ilharco et~al.(2021)Ilharco, Wortsman, Wightman, Gordon, Carlini, Taori, Dave, Shankar, Namkoong, Miller, Hajishirzi, Farhadi, and Schmidt]{ilharco_gabriel_2021_5143773}
Ilharco, G., Wortsman, M., Wightman, R., Gordon, C., Carlini, N., Taori, R., Dave, A., Shankar, V., Namkoong, H., Miller, J., Hajishirzi, H., Farhadi, A., and Schmidt, L.
\newblock {OpenCLIP}, 2021.

\bibitem[Ilharco et~al.(2022)Ilharco, Wortsman, Gadre, Song, Hajishirzi, Kornblith, Farhadi, and Schmidt]{ilharco2022patching}
Ilharco, G., Wortsman, M., Gadre, S.~Y., Song, S., Hajishirzi, H., Kornblith, S., Farhadi, A., and Schmidt, L.
\newblock {Patching open-vocabulary models by interpolating weights}.
\newblock \emph{Advances in Neural Information Processing Systems}, 2022.

\bibitem[Ilharco et~al.(2023)Ilharco, Ribeiro, Wortsman, Schmidt, Hajishirzi, and Farhadi]{ilharcoediting}
Ilharco, G., Ribeiro, M.~T., Wortsman, M., Schmidt, L., Hajishirzi, H., and Farhadi, A.
\newblock {Editing Models with Task Arithmetic}.
\newblock In \emph{International Conference on Learning Representations}, 2023.

\bibitem[Imfeld et~al.(2024)Imfeld, Graldi, Giordano, Hofmann, Anagnostidis, and Singh]{imfeld2024transformer}
Imfeld, M., Graldi, J., Giordano, M., Hofmann, T., Anagnostidis, S., and Singh, S.~P.
\newblock {Transformer Fusion with Optimal Transport}.
\newblock In \emph{International Conference on Learning Representations}, 2024.

\bibitem[Izmailov et~al.(2018)Izmailov, Podoprikhin, Garipov, Vetrov, and Wilson]{izmailov2018averaging}
Izmailov, P., Podoprikhin, D., Garipov, T., Vetrov, D., and Wilson, A.~G.
\newblock {Averaging Weights Leads to Wider Optima and Better Generalization}.
\newblock \emph{arXiv preprint arXiv:1803.05407}, 2018.

\bibitem[Jonker \& Volgenant(1988)Jonker and Volgenant]{jonker1988shortest}
Jonker, R. and Volgenant, T.
\newblock {A shortest augmenting path algorithm for dense and sparse linear assignment problems}.
\newblock In \emph{Papers of the 16th Annual Meeting of DGOR in Cooperation with NSOR/Vortr{\"a}ge der 16. Jahrestagung der DGOR zusammen mit der NSOR}, 1988.

\bibitem[Jordan et~al.(2023)Jordan, Sedghi, Saukh, Entezari, and Neyshabur]{jordan2023repair}
Jordan, K., Sedghi, H., Saukh, O., Entezari, R., and Neyshabur, B.
\newblock {REPAIR: REnormalizing Permuted Activations for Interpolation Repair}.
\newblock In \emph{International Conference on Learning Representations}, 2023.

\bibitem[Jovanovi{\'c} \& Stani{\'c}(2012)Jovanovi{\'c} and Stani{\'c}]{jovanovic2012spectral}
Jovanovi{\'c}, I. and Stani{\'c}, Z.
\newblock {Spectral distances of graphs}.
\newblock \emph{Linear Algebra and its Applications}, 436\penalty0 (5), 2012.

\bibitem[Krizhevsky et~al.(2009)Krizhevsky, Hinton, et~al.]{krizhevsky2009learning}
Krizhevsky, A., Hinton, G., et~al.
\newblock {Learning Multiple Layers of Features from Tiny Images}.
\newblock \emph{Technical Report, University of Toronto}, 2009.

\bibitem[Lu et~al.(2024)Lu, Chen, Williamson, Chen, Liang, Ding, Jaume, Odintsov, Le, Gerber, et~al.]{lu2024avisionlanguage}
Lu, M.~Y., Chen, B., Williamson, D.~F., Chen, R.~J., Liang, I., Ding, T., Jaume, G., Odintsov, I., Le, L.~P., Gerber, G., et~al.
\newblock {A visual-language foundation model for computational pathology}.
\newblock \emph{Nature Medicine}, 30, 2024.

\bibitem[Mall et~al.(2024)Mall, Phoo, Liu, Vondrick, Hariharan, and Bala]{mall2024remote}
Mall, U., Phoo, C.~P., Liu, M.~K., Vondrick, C., Hariharan, B., and Bala, K.
\newblock {Remote Sensing Vision-Language Foundation Models without Annotations via Ground Remote Alignment}.
\newblock In \emph{International Conference on Learning Representations}, 2024.

\bibitem[Matena \& Raffel(2022)Matena and Raffel]{matena2022merging}
Matena, M.~S. and Raffel, C.~A.
\newblock {Merging Models with Fisher-Weighted Averaging}.
\newblock \emph{Advances in Neural Information Processing Systems}, 35, 2022.

\bibitem[McCloskey \& Cohen(1989)McCloskey and Cohen]{mccloskey1989catastrophic}
McCloskey, M. and Cohen, N.~J.
\newblock {Catastrophic Interference in Connectionist Networks: The Sequential Learning Problem}.
\newblock In \emph{Psychology of learning and motivation}, volume~24, pp.\  109--165. Academic Press, 1989.

\bibitem[Navon et~al.(2023)Navon, Shamsian, Achituve, Fetaya, Chechik, and Maron]{navon2023equivariant}
Navon, A., Shamsian, A., Achituve, I., Fetaya, E., Chechik, G., and Maron, H.
\newblock {Equivariant Architectures for Learning in Deep Weight Spaces}.
\newblock In \emph{International Conference on Machine Learning}, 2023.

\bibitem[Netzer et~al.(2011)Netzer, Wang, Coates, Bissacco, Wu, Ng, et~al.]{netzer2011reading}
Netzer, Y., Wang, T., Coates, A., Bissacco, A., Wu, B., Ng, A.~Y., et~al.
\newblock {Reading Digits in Natural Images with Unsupervised Feature Learning}.
\newblock In \emph{Neural Information Processing Systems Workshops}. Granada, 2011.

\bibitem[Ortiz-Jimenez et~al.(2024)Ortiz-Jimenez, Favero, and Frossard]{ortiz2024task}
Ortiz-Jimenez, G., Favero, A., and Frossard, P.
\newblock {Task Arithmetic in the Tangent Space: Improved Editing of Pre-Trained Models}.
\newblock \emph{Advances in Neural Information Processing Systems}, 2024.

\bibitem[Pe{\~n}a et~al.(2023)Pe{\~n}a, Medeiros, Dubail, Aminbeidokhti, Granger, and Pedersoli]{pena2023re}
Pe{\~n}a, F. A.~G., Medeiros, H.~R., Dubail, T., Aminbeidokhti, M., Granger, E., and Pedersoli, M.
\newblock {Re-basin via implicit Sinkhorn differentiation}.
\newblock In \emph{Proceedings of the IEEE conference on Computer Vision and Pattern Recognition}, 2023.

\bibitem[Porrello et~al.(2025)Porrello, Bonicelli, Buzzega, Millunzi, Calderara, and Cucchiara]{porrello2024second}
Porrello, A., Bonicelli, L., Buzzega, P., Millunzi, M., Calderara, S., and Cucchiara, R.
\newblock {A Second-Order Perspective on Model Compositionality and Incremental Learning}.
\newblock In \emph{International Conference on Learning Representations}, 2025.

\bibitem[Radford et~al.(2021)Radford, Kim, Hallacy, Ramesh, Goh, Agarwal, Sastry, Askell, Mishkin, Clark, et~al.]{radford2021learning}
Radford, A., Kim, J.~W., Hallacy, C., Ramesh, A., Goh, G., Agarwal, S., Sastry, G., Askell, A., Mishkin, P., Clark, J., et~al.
\newblock {Learning Transferable Visual Models From Natural Language Supervision}.
\newblock In \emph{International Conference on Machine Learning}, 2021.

\bibitem[Rame et~al.(2022)Rame, Kirchmeyer, Rahier, Rakotomamonjy, patrick gallinari, and Cord]{rame2022diverse}
Rame, A., Kirchmeyer, M., Rahier, T., Rakotomamonjy, A., patrick gallinari, and Cord, M.
\newblock {Diverse Weight Averaging for Out-of-Distribution Generalization}.
\newblock In \emph{Advances in Neural Information Processing Systems}, 2022.

\bibitem[Ram{\'e} et~al.(2023)Ram{\'e}, Ahuja, Zhang, Cord, Bottou, and Lopez-Paz]{rame2022recycling}
Ram{\'e}, A., Ahuja, K., Zhang, J., Cord, M., Bottou, L., and Lopez-Paz, D.
\newblock {Model Ratatouille: Recycling Diverse Models for Out-of-Distribution Generalization}.
\newblock In \emph{International Conference on Machine Learning}, 2023.

\bibitem[Singh \& Jaggi(2020)Singh and Jaggi]{singh2020model}
Singh, S.~P. and Jaggi, M.
\newblock {Model Fusion via Optimal Transport}.
\newblock \emph{Advances in Neural Information Processing Systems}, 2020.

\bibitem[Stallkamp et~al.(2011)Stallkamp, Schlipsing, Salmen, and Igel]{stallkamp2011german}
Stallkamp, J., Schlipsing, M., Salmen, J., and Igel, C.
\newblock {The German Traffic Sign Recognition Benchmark: A multi-class classification competition}.
\newblock In \emph{The 2011 international joint conference on neural networks}. IEEE, 2011.

\bibitem[Stoica et~al.(2024)Stoica, Bolya, Bjorner, Ramesh, Hearn, and Hoffman]{stoica2024zipit}
Stoica, G., Bolya, D., Bjorner, J., Ramesh, P., Hearn, T., and Hoffman, J.
\newblock {ZipIt! Merging Models from Different Tasks without Training}.
\newblock In \emph{International Conference on Learning Representations}, 2024.

\bibitem[Wang et~al.(2019)Wang, Singh, Michael, Hill, Levy, and Bowman]{wang2018glue}
Wang, A., Singh, A., Michael, J., Hill, F., Levy, O., and Bowman, S.~R.
\newblock {GLUE A Multi-Task Benchmark and Analysis Platform for Natural Language Understanding}.
\newblock In \emph{International Conference on Learning Representations}, 2019.

\bibitem[Wortsman et~al.(2022{\natexlab{a}})Wortsman, Ilharco, Gadre, Roelofs, Gontijo-Lopes, Morcos, Namkoong, Farhadi, Carmon, Kornblith, et~al.]{wortsman2022model}
Wortsman, M., Ilharco, G., Gadre, S.~Y., Roelofs, R., Gontijo-Lopes, R., Morcos, A.~S., Namkoong, H., Farhadi, A., Carmon, Y., Kornblith, S., et~al.
\newblock {Model soups: averaging weights of multiple fine-tuned models improves accuracy without increasing inference time}.
\newblock In \emph{International Conference on Machine Learning}, 2022{\natexlab{a}}.

\bibitem[Wortsman et~al.(2022{\natexlab{b}})Wortsman, Ilharco, Kim, Li, Kornblith, Roelofs, Lopes, Hajishirzi, Farhadi, Namkoong, et~al.]{wortsman2022robust}
Wortsman, M., Ilharco, G., Kim, J.~W., Li, M., Kornblith, S., Roelofs, R., Lopes, R.~G., Hajishirzi, H., Farhadi, A., Namkoong, H., et~al.
\newblock {Robust fine-tuning of zero-shot models}.
\newblock In \emph{Proceedings of the IEEE conference on Computer Vision and Pattern Recognition}, 2022{\natexlab{b}}.

\bibitem[Yadav et~al.(2024)Yadav, Tam, Choshen, Raffel, and Bansal]{yadav2024ties}
Yadav, P., Tam, D., Choshen, L., Raffel, C.~A., and Bansal, M.
\newblock {TIES-Merging: Resolving Interference When Merging Models}.
\newblock \emph{Advances in Neural Information Processing Systems}, 36, 2024.

\bibitem[Zhang et~al.(2024)Zhang, Albert, Rodriguez-Opazo, van~den Hengel, and Abbasnejad]{zhang2024knowledge}
Zhang, F.~Z., Albert, P., Rodriguez-Opazo, C., van~den Hengel, A., and Abbasnejad, E.
\newblock {Knowledge Composition using Task Vectors with Learned Anisotropic Scaling}.
\newblock \emph{Advances in Neural Information Processing Systems}, 2024.

\end{thebibliography}
\bibliographystyle{icml2025}

\newpage 

\clearpage
\appendix
\section{Appendix}

\setlength{\belowdisplayskip}{8pt} 
\setlength{\abovedisplayskip}{8pt}

\subsection{On the Invariance to Permutations of our Metric for Inter-head Alignment}
\label{sec:teorema}
\begin{proposition}
    Let $h \in \mathbb{R}^{m \times n}$ be arbitrary. For any $h$, denote its singular values by $\sigma(h)=(\sigma_1(h), \sigma_2(h), \dots,\sigma_{\min(m,n)}(h))$, where $\sigma_1(h)\ge \sigma_2(h)\ge \cdots \ge 0$. For two matrices $h_1, h_2$ of the same shape, define
    \begin{equation}
    d_p(h_1,h_2) = \| \sigma(h_1) - \sigma(h_2)\|_p\,,
    \end{equation}    
    where $\| \cdot \|_p$ is the usual $p$-norm for vectors. Then, for any permutation matrices $P_r \in \mathbb{R}^{m\times m}$ and $P_c \in\mathbb{R}^{n\times n}$, the row- and column-permuted matrix
    \begin{equation}
    h' = P_r h P_c
    \end{equation}    
    has exactly the same singular values as $h$. In particular,
    \begin{equation}
    d_p(h,h') = 0 
    \end{equation}    
    for every $p$, making $d$ invariant under row- and column-permutations of $h$.
\end{proposition}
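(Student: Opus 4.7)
The plan is to exploit the fact that permutation matrices are a special case of orthogonal matrices, together with the well-known unitary invariance of the singular value spectrum. Concretely, I would first record the basic identities $P_r^\top P_r = I_m$ and $P_c P_c^\top = I_n$, and observe that both $P_r$ and $P_c$ therefore lie in the orthogonal group. The claim will then reduce to the statement that multiplying a matrix on the left or right by an orthogonal matrix leaves its singular values unchanged, applied once on each side.

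Next, I would prove that unitary invariance by constructing an explicit SVD of $h' = P_r h P_c$ from any SVD $h = U \Sigma V^\top$ of $h$. Writing
\begin{equation}
h' \;=\; P_r h P_c \;=\; (P_r U)\,\Sigma\,(P_c^\top V)^\top,
\end{equation}
I would note that $P_r U$ is orthogonal (product of orthogonals) and likewise $P_c^\top V$ is orthogonal, so the right-hand side is a valid SVD of $h'$ whose diagonal factor coincides with $\Sigma$. By uniqueness of the singular values (as an ordered tuple in non-increasing order), this forces $\sigma(h') = \sigma(h)$ entrywise.

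As a sanity check, and to avoid any appeal to uniqueness of the $U, V$ factors, I would also sketch the alternative route through the Gram matrix: compute
\begin{equation}
(h')^\top h' \;=\; P_c^\top h^\top P_r^\top P_r h P_c \;=\; P_c^\top (h^\top h) P_c,
\end{equation}
which is a similarity transform of $h^\top h$. Similar matrices share eigenvalues, hence $h^\top h$ and $(h')^\top h'$ have identical spectra; taking non-negative square roots recovers $\sigma(h') = \sigma(h)$.

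Once the spectra coincide componentwise, the conclusion is immediate: $\sigma(h) - \sigma(h') = 0 \in \mathbb{R}^{\min(m,n)}$, so $d_p(h,h') = \|\,\sigma(h) - \sigma(h')\,\|_p = 0$ for every $p \in [1, \infty]$. I do not anticipate a genuine obstacle; the only subtlety worth highlighting explicitly is that the ordering convention on $\sigma(\cdot)$ matters, since otherwise singular values would only agree as multisets and the vector difference would not be well-defined. Fixing the non-increasing convention (as the statement already does) removes this ambiguity and closes the argument.
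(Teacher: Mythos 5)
Your proof is correct. Your lead argument — building an explicit SVD of $h' = P_r h P_c$ as $(P_r U)\,\Sigma\,(P_c^\top V)^\top$ and appealing to the fact that products of orthogonal matrices are orthogonal, so the diagonal factor $\Sigma$ is unchanged — is a genuinely different (and slightly more direct) route than the paper's, which instead passes through the Gram matrix: the paper computes $(h')^\top h' = P_c^\top(h^\top h)P_c$, observes this is a similarity transform of $h^\top h$ and hence has the same eigenvalues, and takes square roots. Your explicit-SVD route avoids the detour through eigenvalues of $h^\top h$ entirely and only needs uniqueness of the (sorted) singular values, which you correctly flag as the only subtlety; the paper's route trades that for the standard similarity-invariance of the spectrum. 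You also included the Gram-matrix computation as a fallback, and that part of your write-up coincides with the paper's proof essentially verbatim. Both routes are sound and of comparable length, so this is a matter of taste; the paper's exposition happens to foreground the one you relegated to a sanity check.
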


\begin{proof}
\small
    If $P$ is a permutation matrix, then $P^\top P = I$, \emph{i.e.} it is orthogonal. Furthermore, the singular values of any matrix $h$ are given by the square root of the eigenvalues of  $h^\top h$. If $h' = P_r \, h \, P_c$, then
   \begin{align}
     (h')^\top (h') &=
     (P_r \, h \, P_c)^\top \,\,(P_r \, h \, P_c) \\
     &= P_c^\top \, h^\top \, P_r^\top \, P_r \, h \, P_c \\
     &= P_c^\top \, h^\top \, h \, P_c \,.
   \end{align}
   Since $P_c^\top \, h^\top h \, P_c$ is a similarity transform of $h^\top h$, which does not change the eigenvalues, $h^\top h$ and $(h')^\top h'$ have the same eigenvalues, and in turn $h$ and $h'$ share the same singular values. hence $\sigma(h')=\sigma(h)$, and therefore
   \begin{equation}
     d_p(h,h')=\| \sigma(h) - \sigma(h')\|_p =0\,,
   \end{equation}
   proving that, for any row or column permutation of $h$, the distance $d(h, h')$ remains unchanged.
\end{proof}

\subsection{Proof of Equivariance of Multi-Head Attention to Structured Permutations~\ref{thm:attn-equiv}}
\label{app:proof-attn-equiv}
\begin{proof}
We provide a detailed, step-by-step proof showing that our two-stage alignment procedure—inter-head reordering followed by intra-head permutations—preserves the functionality of a multi-head self-attention layer.
Let:
\begin{itemize}
    \item $X \in \mathbb{R}^{S \times d_{\text{model}}}$ be the input sequence.
    \item $W_q, W_k, W_v \in \mathbb{R}^{d_{\text{model}} \times d_{\text{model}}}$ be the query, key, and value projection matrices.
    \item $H$ be the number of attention heads, each of dimensionality $d_k = d_{\text{model}} / H$.
\end{itemize}

Define:
\begin{equation}
Q = X W_q, \quad K = X W_k, \quad V = X W_v,
\end{equation}
and split them by head:
\begin{equation}
Q = [Q_1, Q_2, \ldots, Q_H], \quad Q_i \in \mathbb{R}^{S \times d_k},
\end{equation}
and similarly for $K$ and $V$.
Let $P_{\text{inter}}$ be an inter-head permutation in $\mathcal{S}_H$, with induced permutation vector $\pi$, and let $P_{\text{intra}}^{(i)} \in \mathcal{S}_{d_k}$ be the intra-head permutation for head $i$. We form the block-permutation matrix:
\begin{equation}
P_{\text{attn}} = \sum_{i=1}^{H} E^{i,\pi(i)} \otimes P^{(i)}_{\text{intra}},
\end{equation}
where $E^{i,\pi(i)}$ is a binary $H\times H$ matrix with a single 1 at $(i,\pi(i))$, and $\otimes$ denotes the Kronecker product.

\subsection*{Step 1: Permuting the projection weights}
Applying $P_{\text{attn}}$ to the query projections gives:
\begin{align*}
Q' &= X W_q P_{\text{attn}} = Q P_{\text{attn}} \\
&= \left[ \sum_{j=1}^{H} Q_j P_{\text{attn}}[j,i] \right]_{i=1}^{H} \\ 
&= \left[ Q_{\pi^{-1}(i)} P^{\pi^{-1}(i)}_{\text{intra}} \right]_{i=1}^{H}
 \end{align*}
Hence,
\begin{equation}
Q'_i = Q_{\pi^{-1}(i)}\,P^{\pi^{-1}(i)}_{\text{intra}},
\end{equation}
where the new head $Q'_i$ corresponds to the head designated by the inter-head permutation $\pi^{-1}(i)$, modified according to $P_{\text{intra}}^{\pi^{-1}(i)}$. The same applies to:
\begin{equation}
K'_i = K_{\pi^{-1}(i)}P^{\pi^{-1}(i)}_{\text{intra}}, \quad V'_i = V_{\pi^{-1}(i)}P^{\pi^{-1}(i)}_{\text{intra}}.
\end{equation}

\subsection*{Step 2: Permuting the attention scores}
Because each $P_{\text{intra}}^{(i)}$ is orthogonal ($P P^T = I$), the attention scores satisfy:
\begin{align*}
A'_i &= \mathrm{softmax}\left(\frac{Q'_i {K'_i}^T}{\sqrt{d_k}}\right) \\
     &= \mathrm{softmax}\left(\frac{Q_{\pi^{-1}(i)} P_{\text{intra}}^{\pi^{-1}(i)} (P_{\text{intra}}^{\pi^{-1}(i)})^T K_{\pi^{-1}(i)}^T}{\sqrt{d_k}}\right) \\
     &= \mathrm{softmax}\left(\frac{Q_{\pi^{-1}(i)} K_{\pi^{-1}(i)}^T}{\sqrt{d_k}}\right) = A_{\pi^{-1}(i)}.
\end{align*}
Thanks to the orthogonality of the intra-head permutation blocks, the attention scores are only influenced by the inter-head permutation.

\subsection*{Step 3: Permuting the value outputs}
For each head,
\begin{equation}
O'_i = A'_i V'_i = A_{\pi^{-1}(i)} V_{\pi^{-1}(i)} P_{\text{intra}}^{\pi^{-1}(i)} = O_{\pi^{-1}(i)} P_{\text{intra}}^{\pi^{-1}(i)}.
\end{equation}

\subsection*{Step 4: Reconstructing the final output}
Concatenating all heads yields:
\begin{equation}
O' = [O'_1, O'_2, \ldots, O'_H] = O P_{\text{attn}}.
\end{equation}

\paragraph{Conclusion.}
Applying $P_{\text{attn}}$ to the projection matrices is thus equivalent to permuting the output of multi-head attention. The self-attention layer remains functionally equivalent, and the original output can be recovered via $O = O'P_{\text{attn}}^T$.
\end{proof}

\subsection{Full Procedure to Manage Residual Connections}
\label{proof:residual}
\begin{proof}
\small
We begin with the standard formulation of a transformer block, ignoring LayerNorm for simplicity:
\begin{equation}
\begin{aligned}
\mathbf{z}_{attn} &= W_{0}\operatorname{MHA}(\mathbf{x}), \\
\mathbf{z}_i &= \mathbf{z}_{attn} + \mathbf{x}, \\ 
\mathbf{z}_f &= W_2 \text{ReLU}(W_1 \mathbf{z}_i), \\
\mathbf{z}_{out} &= \mathbf{z}_f + \mathbf{z}_i.
\end{aligned}
\end{equation}
Ignoring the ReLU activation function as well, we examine the impact of applying a permutation to one layer within a transformer block and then reversing it in the subsequent layer. This transformation leads to:
\begin{equation}
\begin{aligned}
\mathbf{z}_{\text{attn}} &= {P}_{\mathrm{W_0}}{W}_0 {P}_{\mathrm{attn}}^\top \Bigr( {P}_{\mathrm{attn}} \operatorname{MHA} {P}_{\mathrm{in}}^\top ({P}_{\mathrm{in}} \mathbf{x}) \Bigr), \\
\mathbf{z}_i &= {P}_{\mathrm{W_0}} \mathbf{z}_{\text{attn}} + {P}_{\mathrm{in}} \mathbf{x}, \\ 
\mathbf{z}_f &= {P}_{\mathrm{W_2}} {W}_2 {P}_{\mathrm{W_1}}^\top \Bigr( {P}_{\mathrm{W_1}} {W}_1 {P}_{\mathrm{W_0}}^\top (\mathbf{z}_i)\Bigr), \\
\mathbf{z}_{\text{out}} &= {P}_{\mathrm{W_2}} \mathbf{z}_f + {P}_{\mathrm{W_0}} \mathbf{z}_i.
\end{aligned}
\end{equation}
To ensure consistency in the permutation applied to both addends within each residual block, we replace the identity mapping with a permutation composition \(\mathcal{I}\), where \(\mathcal{I}_i = {P}_{\mathrm{W_0}} {P}_{\text{in}}^\top\) and \(\mathcal{I}_{\text{out}} = {P}_{\mathrm{W_2}} {P}_{\mathrm{W_0}}^\top\). This results in:

\begin{equation}
\begin{aligned}
\mathbf{z}_i &= {P}_{\mathrm{W_0}} \mathbf{z}_{\text{attn}} + \mathcal{I}_i{P}_{\text{in}}\mathbf{x} 
= {P}_{\mathrm{W_0}} \mathbf{z}_{\text{attn}} + {P}_{\mathrm{W_0}} \mathbf{x}, \\ 
\mathbf{z}_{\text{out}} &= {P}_{\mathrm{W_2}} \mathbf{z}_f + \mathcal{I}_{\text{out}}{P}_{\mathrm{W_0}} \mathbf{z}_i  
= {P}_{\mathrm{W_2}} \mathbf{z}_f + {P}_{\mathrm{W_2}} \mathbf{z}_i.
\end{aligned}
\end{equation}

After incorporating these compositions, the permutations remain consistent across each residual path, simplifying the block equations to:
\begin{equation}
\begin{aligned}
\mathbf{z}_{\text{attn}} &= {P}_{\mathrm{W_0}} {W}_0 \Bigr( \operatorname{MHA} (\mathbf{x}) \Bigr), \\
\mathbf{z}_i &= {P}_{\mathrm{W_0}} \mathbf{z}_{\text{attn}} + {P}_{\mathrm{W_0}} \mathbf{x} , \\ 
\mathbf{z}_f &= {P}_{\mathrm{W_2}} {W}_2 \Bigr({W}_1 (\mathbf{z}_i)\Bigr), \\
\mathbf{z}_{\text{out}} &= {P}_{\mathrm{W_2}} \mathbf{z}_f + {P}_{\mathrm{W_2}} \mathbf{z}_i.
\end{aligned}
\end{equation}
With ${P}_{\mathrm{W_2}}$ serving as the input permutation for the subsequent layer.
\end{proof}

\subsection{Extended Comparison on the Application of the Task Vector}
\label{sec:app_other_datasets}
In~\cref{fig:alpha_sup}, we extend the sensitivity analysis of the scaling coefficient $\alpha$ --- originally presented in~\cref{fig:alpha} --- to additional visual classification tasks.
\begin{figure}[t]
\centering
\includegraphics[width=.48\textwidth]{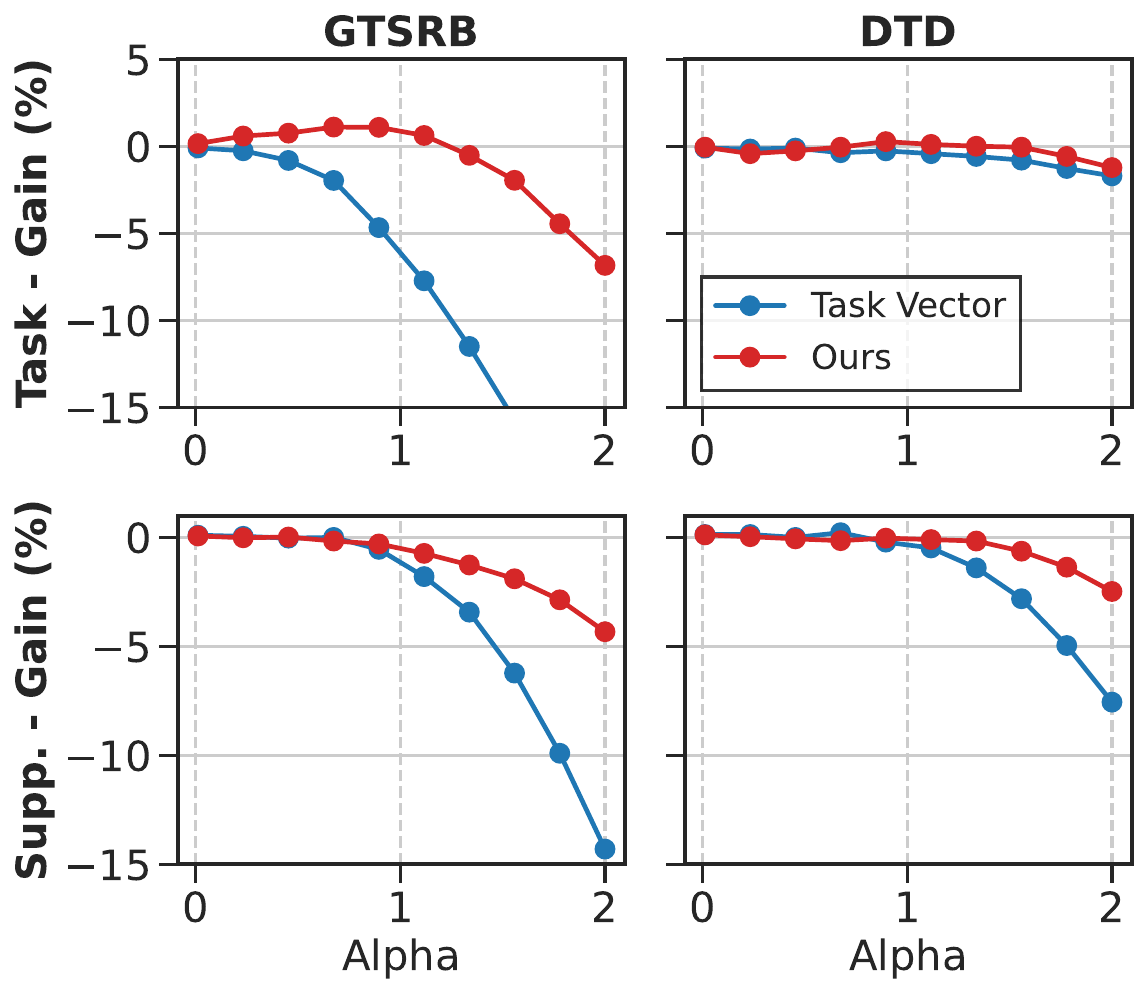} 
\caption{Zero-shot gain/drop relative to $\theta_B$ of naive $\theta_B+\alpha\tau$ (\textcolor{blue}{blue}) and our strategy $\theta_B+\alpha\pi(\tau)$ (\textcolor{red}{red}) varying $\alpha$.}
\label{fig:alpha_sup}
\end{figure}

\subsection{Proof of Proposition~\ref{prop:complexity}}
\label{app:complexity}
\begin{proposition}
\begin{proof}
To assess how computational complexity scales with model size, we define:

\begin{itemize}
    \item $L$: number of layers, evenly divided into MLP ($\frac{L}{2}$) and self-attention ($\frac{L}{2}$).
    \item $H$: number of attention heads.
    \item Each MLP layer contains two linear projections with dimension $(d_m, d_h)$ and $(d_h, d_m)$, assuming $d_m = d_h$.
    \item Self-attention layers have Q, K, and V matrices, each of size $(d_m, d_m)$.
\end{itemize}

We now estimate the complexity for a single iteration of the weight-matching algorithm.

\paragraph{MLP Layers.}
The main computational cost comes from computing a pairwise similarity matrix between rows of projection matrices ($O(d_m^3)$), and solving a $(d_m, d_m)$ assignment via Hungarian algorithm ($O(d_m^3)$). Hence, per-layer cost is:
\begin{align}
    O(d_m^3)
\end{align}

\paragraph{Self-Attention Layers.}
Split into two steps: inter-head and intra-head permutation.

\textbf{Inter-head permutation:}
\begin{itemize}
    \item $6H$ SVDs over matrices of size $(\frac{d_m}{H}, d_m)$
    \begin{align}
        \quad O\left(\frac{6d_m^3}{H}\right)
    \end{align}
    \item Distance matrix over heads:
    \begin{align}
        O\left(\frac{3H^2 d_m}{2}\right)
    \end{align}
    \item Hungarian algorithm over $(H, H)$ matrix:
    \begin{align}
        O(H^3)
    \end{align}
\end{itemize}

\textbf{Intra-head permutation:}
\begin{itemize}
    \item Per-head similarity:
    \begin{align}
        O\left( \left(\frac{d_m}{H}\right)^2 d_m \right)
        = O\left(\frac{d_m^3}{H^2} \right)
    \end{align}
    \item Hungarian algorithm per head:
    \begin{align}
        O\left( \left(\frac{d_m}{H}\right)^3 \right)
    \end{align}
\end{itemize}

Summed over $H$ heads:
\begin{equation}
    O\left( H \left( \frac{d_m^3}{H^2} + \left(\frac{d_m}{H}\right)^3 \right) \right) = O\left( \frac{d_m^3}{H} + \frac{d_m^3}{H^2} \right)
\end{equation}

\paragraph{Total Self-Attention Cost per Layer.}
\begin{equation}
    O\Bigg( \frac{6 d_m^3}{H} + \frac{3H^2 d_m}{2} + H^3 + \frac{d_m^3}{H} + \frac{d_m^3}{H^2} \Bigg)
\end{equation}

\paragraph{Final Complexity.}
Summing across $\frac{L}{2}$ MLP and $\frac{L}{2}$ attention layers:
\begin{equation}
    O\Bigg( \frac{L}{2} d_m^3 + \frac{L}{2} \Big( \frac{6 d_m^3}{H} 
    + \frac{3H^2 d_m}{2} + H^3 + \frac{d_m^3}{H} + \frac{d_m^3}{H^2} \Big) \Bigg).
\end{equation}

This expression can be algebraically simplified to a more compact equivalent form:
\begin{align}
    O\left( L \left( d_m^3 + \frac{d_m^3}{H} 
    + \frac{d_m^3}{H^2} + H^3 + H^2 d_m \right) \right).
\end{align}

So, the complexity scales polynomially with $d_m$ and $H$, and remains significantly lower than data-based fine-tuning.
\end{proof}
\end{proposition}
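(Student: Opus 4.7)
The plan is to decompose the total cost of one outer pass of the weight-matching algorithm (Algorithm~\ref{global_algo}) into per-layer contributions and then multiply by $L$. Because the procedure treats MLP layers exactly as in Git Re-Basin and attention layers via the structured two-step scheme of Algorithm~\ref{mha_algo}, I would analyze these two cases separately, bound each by a polynomial in $d_m$ and $H$, and finally verify that the attention block never exceeds the MLP block's $O(d_m^3)$ bound so that the overall rate is governed by a sum of $L$ terms each of order $d_m^3$.

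For an MLP layer, the LAP in \cref{eq:ff-rebasin} requires forming the pairwise inner-product cost matrix between the $d_m$ rows of two $d_m \times d_m$ weight matrices at cost $O(d_m^3)$, followed by the Hungarian algorithm on a $d_m \times d_m$ matrix, also at cost $O(d_m^3)$; the per-layer contribution is therefore $O(d_m^3)$. For an attention layer I would split the analysis into the two stages of Algorithm~\ref{mha_algo}. In Step~1, one performs $6H$ SVDs on sub-matrices of shape $(d_m/H)\times d_m$ (three projections $Q$, $K$, $V$ for each of the two models, partitioned into $H$ heads), each costing $O\!\bigl((d_m/H)^2\cdot d_m\bigr)=O(d_m^3/H^2)$, for a total of $O(d_m^3/H)$; assembling the $H\times H$ spectral distance matrix from singular-value vectors of length $d_m/H$ costs $O(H\, d_m)$, and the Hungarian algorithm on this small matrix costs $O(H^3)$. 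In Step~2, for each of the $H$ matched head pairs one forms an inner-product cost matrix of dimension $(d_m/H)\times(d_m/H)$ at cost $O(d_m^3/H^2)$ and then solves a LAP of the same size at cost $O((d_m/H)^3)$; summed over $H$ heads this contributes $O(d_m^3/H+d_m^3/H^2)$.

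Combining all pieces, each attention layer costs $O\!\bigl(d_m^3/H+H^3+H\, d_m\bigr)$, which is dominated by $O(d_m^3)$ in the practically relevant regime $H\le d_m$. Summing MLP and attention contributions over $L$ layers yields the claimed overall complexity $O(L\, d_m^3)$, matching that of Git Re-Basin, and one more polylogarithmic factor from the outer \textbf{repeat-until-convergence} loop of Algorithm~\ref{global_algo} can be absorbed since convergence in coordinate-descent Hungarian passes is observed empirically to occur in very few iterations. The main obstacle is not mathematical subtlety but careful bookkeeping of the head-partitioning: one must confirm that the $H$-indexed sum of the per-head cubic costs $(d_m/H)^3$ and $(d_m/H)^2\cdot d_m$ remains strictly below the unstructured $d_m^3$ bound incurred when attention projections are treated as a monolithic $d_m\times d_m$ layer, and that the extra SVD preprocessing together with the $O(H^3)$ inter-head LAP are genuinely lower-order corrections rather than hidden bottlenecks that would alter the asymptotic rate.
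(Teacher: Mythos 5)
Your proposal is correct and follows essentially the same route as the paper's proof: a per-layer decomposition into MLP cost ($O(d_m^3)$ for the cost matrix plus Hungarian step) and attention cost (SVDs, spectral distance matrix, and inter-/intra-head LAPs), with the identical per-term bounds $O(d_m^3/H)$, $O(H^3)$, and $O(d_m^3/H + d_m^3/H^2)$, summed over $L$ layers. The only differences are cosmetic --- you bound the spectral distance matrix by $O(H\,d_m)$ where the paper writes $O(3H^2 d_m/2)$, and you explicitly flag the outer convergence loop that the paper sidesteps by analyzing a single iteration --- neither of which affects the final $O(L\,d_m^3)$ bound.
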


\end{document}